\documentclass{article}

\PassOptionsToPackage{numbers}{natbib}
\usepackage[final]{neurips_2020}

\title{Tree! I am no Tree! I am a Low Dimensional Hyperbolic Embedding}

%%%%% NEW MATH DEFINITIONS %%%%%

\usepackage{amsmath,amsfonts,bm}

% Mark sections of captions for referring to divisions of figures

% Highlight a newly defined term

% Figure reference, lower-case.

% Figure reference, capital. For start of sentence

% Section reference, lower-case.

% Section reference, capital.

% Reference to two sections.

% Reference to three sections.

% Reference to an equation, lower-case.
\def\eqref#1{equation~\ref{#1}}
% Reference to an equation, upper case

% A raw reference to an equation---avoid using if possible

% Reference to a chapter, lower-case.

% Reference to an equation, upper case.

% Reference to a range of chapters

% Reference to an algorithm, lower-case.

% Reference to an algorithm, upper case.

% Reference to a part, lower case

% Reference to a part, upper case

\def\1{\bm{1}}

% Random variables

% rm is already a command, just don't name any random variables m

% Random vectors

% Elements of random vectors

% Random matrices

% Elements of random matrices

% Vectors

% Elements of vectors

% Matrix

% Tensor
\DeclareMathAlphabet{\mathsfit}{\encodingdefault}{\sfdefault}{m}{sl}
\SetMathAlphabet{\mathsfit}{bold}{\encodingdefault}{\sfdefault}{bx}{n}

% Graph

% Sets

% Don't use a set called E, because this would be the same as our symbol
% for expectation.

% Entries of a matrix

% entries of a tensor
% Same font as tensor, without \bm wrapper

% The true underlying data generating distribution

% The empirical distribution defined by the training set

% The model distribution

% Stochastic autoencoder distributions

 % Laplace distribution

\newcommand{\R}{\mathbb{R}}

% Wolfram Mathworld says $L^2$ is for function spaces and $\ell^2$ is for vectors
% But then they seem to use $L^2$ for vectors throughout the site, and so does
% wikipedia.

 % See usage in notation.tex. Chosen to match Daphne's book.

\DeclareMathOperator*{\argmin}{arg\,min}

\usepackage[utf8]{inputenc} % allow utf-8 input
\usepackage[T1]{fontenc}    % use 8-bit T1 fonts
\usepackage{hyperref}       % hyperlinks
\usepackage{booktabs}       % professional-quality tables
\usepackage{amsfonts}       % blackboard math symbols
\usepackage{nicefrac}       % compact symbols for 1/2, etc.
\usepackage{microtype}      % microtypography
\usepackage{enumerate}
\usepackage{fancyhdr}
\usepackage{mathrsfs}
\usepackage{graphicx}
\usepackage{enumitem}
\usepackage{chngcntr}
\usepackage{bbm}
\usepackage{subfigure}
\usepackage[noend]{algpseudocode}
\usepackage{enumitem}
\usepackage{algorithm,algorithmicx}
\usepackage{amsmath,xparse}
\usepackage{amssymb}
\usepackage{amsthm}
\usepackage[table,svgnames]{xcolor}

\setlength{\heavyrulewidth}{1.5pt}
\setlength{\abovetopsep}{4pt}

%newcommands

\newcommand{\cellhim}[1]{\cellcolor{RoyalBlue!#1}}

\newcommand{\cellbest}{\cellcolor{Green!40}}

\newtheorem{thm}{Theorem}
\newtheorem{lem}{Lemma}

\newtheorem{defn}{Definition}
\newtheorem{prop}{Proposition}
\newtheorem{rem}{Remark}
\newtheorem{problem}{Problem}

\numberwithin{equation}{section}

\author{%
   Rishi Sonthalia\thanks{Corresponding Author}\\
  Department of Mathematics\\
  University of Michigan\\
  Ann Arbor, MI, 48104 \\
  \texttt{rsonthal@umich.edu} \\
  % examples of more authors
   \And
  Anna C. Gilbert \\
  Department of Statistics and Data Science \\
  Yale University \\
  New Haven, CT, 06510 \\
  % Address \\
   \texttt{anna.gilbert@yale.edu} \\
  % \AND
  % Coauthor \\
  % Affiliation \\
  % Address \\
  % \texttt{email} \\
  % \And
  % Coauthor \\
  % Affiliation \\
  % Address \\
  % \texttt{email} \\
  % \And
  % Coauthor \\
  % Affiliation \\
  % Address \\
  % \texttt{email} \\
}

%Appendix references
%\input{labels}
\begin{document}

\maketitle

\begin{abstract}

Given data, finding a faithful low-dimensional hyperbolic embedding of the data is a key method by which we can extract hierarchical information or learn representative geometric features of the data. In this paper, we explore a new method for learning hyperbolic representations by taking a metric-first approach. Rather than determining the low-dimensional hyperbolic embedding directly, we learn a tree structure on the data. This tree structure can then be used directly to extract hierarchical information, embedded into a hyperbolic manifold using Sarkar's construction \cite{sarkar}, or used as a tree approximation of the original metric. To this end, we present a novel fast algorithm \textsc{TreeRep} such that, given a $\delta$-hyperbolic metric (for any $\delta \geq 0$), the algorithm learns a tree structure that approximates the original metric. In the case when $\delta = 0$, we show analytically that \textsc{TreeRep} exactly recovers the original tree structure. We show empirically that \textsc{TreeRep} is not only many orders of magnitude faster than previously known algorithms, but also produces metrics with lower average distortion and higher mean average precision than most previous algorithms for learning hyperbolic embeddings, extracting hierarchical information, and approximating metrics via tree metrics.
\end{abstract}

%!TEX root = ./main.tex
\section{Introduction}

Extracting hierarchical information from data is a key step in understanding and analyzing the structure of the data in a wide range of areas from the analysis of single cell genomic data \cite{singlecell}, to linguistics \cite{dhingra2018embedding}, computer vision \cite{khrulkov2019hyperbolic} and social network analysis \cite{suri}. In single cell genomics, for example, researchers want to understand the developmental trajectory of cellular differentiation. To do so, they seek techniques to visualize, to cluster, and to infer temporal properties of the developmental trajectory of individual cells.

One way to capture the hierarchical structure is to represent the data as a tree. Even simple trees, however, cannot be faithfully represented in low dimensional Euclidean space \cite{Linial1995}. As a result, a variety of remarkably effective hyperbolic representation learning methods, including~\citet{fb1,fb2,albert}, have been developed. These methods learn an embedding of the data points in hyperbolic space by first solving a non-convex optimization problem and then extracting the hyperbolic metric that corresponds to the distances between the embedded points. These methods are successful because of the inherent connections between hyperbolic spaces and trees. They do not, however, come with rigorous geometric guarantees about the quality of the solution. Also, they are slow.

In this paper, we present a metric first approach to extracting hierarchical information and learning hyperbolic representations. The important connection between hyperbolic spaces and trees suggests that the correct approach to learning hyperbolic representations is the metric first approach. That is, first, learn a tree that essentially preserves the distances amongst the data points and then embed this tree into hyperbolic space.\footnote{A similar idea is mentioned in~\citet{albert} for graph inputs rather than general metrics. They do not, however undertake a detailed exploration of the idea.} 
More generally, the metric first approach to metric representation learning is to build or to learn an appropriate metric first by constructing a discrete, combinatorial object that corresponds to the distances and then extracting its low dimensional representation rather than the other way around.

The quality of a hyperbolic representation is judged by the quality of the metric obtained. That is, we say that we have a good quality representation if the hyperbolic metric extracted from the hyperbolic representation is, in some way, faithful to the original metric on the data points. We note that finding a tree metric that approximates a metric is an important problem in its own right. Frequently, we would like to solve metric problems such as transportation, communication, and clustering on data sets. However, solving these problems with general metrics can be computationally challenging and we would like to approximate these metrics by simpler, tree metrics. This approach of approximating metrics via simpler metrics has been extensively studied before. Examples include dimensionality reduction \cite{MR737400} and approximating metrics by simple graph metrics \cite{bartal,spanner}. 

To this end, in this paper, we demonstrate that methods that learn a tree structure first outperform methods that learn hyperbolic embeddings directly. Additionally, we have developed a novel, extremely fast algorithm \textsc{TreeRep} that takes as input a $\delta$-hyperbolic metric and learns a tree structure that approximates the original metric. \textsc{TreeRep} is a new method that makes use of geometric insights obtained from the input metric to infer the structure of the tree. To demonstrate the effectiveness of our method, we compare \textsc{TreeRep} against previous methods such as \citet{constructTree} and \citet{nj} that also recover tree structures given a metric. There is also significant literature on approximating graphs via (spanning) trees with low stretch or distortion, where the algorithms take as input graphs, \emph{not metrics}, and output trees that are subgraphs of the original. We also compare against such algorithms \cite{alon,levelTrees,elkin,prim}. We show that when we are given only a metric and not a graph, then even if we use a nearest neighbor graph or treat the metric as a complete graph \textsc{TreeRep} is not only faster, but produces better results than \cite{constructTree,alon,bartal,levelTrees,prim} and comparable results to \cite{nj}. 

For learning hyperbolic representations, we demonstrate that \textsc{TreeRep} is over 10,000 times faster than the optimization methods from \citet{fb1,fb2}, and \citet{albert} while producing better quality results in most cases. This extreme decrease in time, with no loss in quality, is exciting as it allows us to extract hierarchical information from much larger data sets in single-cell sequencing, linguistics, and social network analysis - data sets for which such analysis was previously unfeasible.  
 
The rest of the paper is organized as follows. Section \ref{sec:background} contains the relevant background information. Section \ref{sec:treerep} presents the geometric insights and the \textsc{TreeRep} algorithm. In Section \ref{sec:exp}, we compare \textsc{TreeRep} against the methods from \citet{constructTree, alon, levelTrees,prim} and \citet{nj} in approximating metrics via tree metrics and against methods from \citet{fb1,fb2} and \citet{albert} for learning low dimensional hyperbolic embedding. We show that the methods that learn a good tree to approximate the metric, in general, find better hyperbolic representations than those that embed into the hyperbolic manifold directly.

%!TEX root = ./main.tex
\section{Preliminaries}
\label{sec:background}

The formal problem that our algorithm will solve is as follows\footnote{Note that the input to our problem are metrics and not graphs. Thus, we handle more general inputs as compared to \citet{alon,elkin,levelTrees}, and \citet{prim}.}.
\begin{problem} \label{problem}
    Given a metric $d$ find a tree structure $T$ such that the shortest path metric on $T$ approximates $d$.
\end{problem}

\begin{defn} \label{def:APSPmetric}  Given a weighted graph $G=(V,E,W)$ the shortest path metric $d_G$ on $V$ is defined as follows: $\forall u,v \in V$, $d_G(u,v)$ is the length of the shortest path from $u$ to $v$. \end{defn}

\textbf{$\delta$-Hyperbolic Metrics.}
Gromov introduced the notion of $\delta$-hyperbolic metrics as a generalization of the type of metric obtained from negatively curved manifolds \cite{Gromov1987}.

\begin{defn} Given a space $(X,d)$, the Gromov product of $x,y \in X$ with respect to a base point $w \in X$ is 
\[
	(x,y)_w := \frac{1}{2}\left(d(w,x) + d(w,y) - d(x,y) \right). 
\]
The Gromov product is a measure of how close $w$ is to the geodesic $g(x,y)$ connecting $x$ and $y$. 
\end{defn}

\begin{defn} A metric $d$ on a space $X$ is a $\delta$-hyperbolic metric on $X$ (for $\delta \ge 0$), if for every $w,x,y,z \in X$ we have that
\begin{equation} \label{eq:hyp}
	(x,y)_w \ge \min\big( (x,z)_w, (y,z)_w \big) - \delta. 
\end{equation}
In most cases we care about the smallest $\delta$ for which $d$ is $\delta$-hyperbolic. 
\end{defn}

An example of a $\delta$-hyperbolic space is the hyperbolic manifold $\mathbb{H}^k$ with $\displaystyle \delta = \tanh^{-1}\left(1/\sqrt{2}\right)$ \cite{23064}. 

\begin{defn} The hyperboloid model $\mathbb{H}^k$ of the hyperbolic manifold is $ \displaystyle \mathbb{H}^k = \{ x \in \R^{k+1} : x_0 > 0, x_0^2 - \sum_{i=1}^k x_i^2 = 1\}. $
\end{defn}

An important case of hyperbolic metrics is when $\delta = 0$. One important property of such metrics is that they come tree spaces. 

\begin{defn} \label{def:distree} A metric $d$ is a tree metric if there exists a weighted tree $T$ such that the shortest path metric $d_T$ on $T$ is equal to $d$.\footnote{Note, such metrics may have representations as graphs that are not trees, Section \ref{sec:treerep} has a simple example.} \end{defn}

\begin{defn} \label{def:metricgraph} Given a discrete graph $G = (V,E,W)$ the metric graph $(X,d)$ is the space obtained by letting $X = E \times [0,1]$ such that for any $(e,t_1), (e,t_2) \in X$ we have that 
$d((e,t_1),(e,t_2)) = W(e)\cdot |t_1-t_2|$. 
This space is called a tree space if $G$ is a tree. Here $E \times \{0,1\}$ are the nodes of $G$.
\end{defn}

\begin{defn} Given a metric space, $(X,d)$, two points $x,y \in X$, and a continuous function $f:[0,1] \to X$, such that $f(0) = x$, $f(1) = y$, and there is a $\lambda$ such that $d(f(t_1), f(t_2)) = \lambda|t_1-t_2|$, the geodesic $g(x,y)$ connecting $x$ and $y$ is the set $f([0,1])$.
\end{defn}

\begin{defn} \label{def:rtree} A metric space $T$ is a tree space (or a $\mathbb{R}$-tree) if any pair of its points can be connected with a unique geodesic segment, and if the union of any two geodesic segments $g(x,y),g(y,z) \subset T$ having the only endpoint $y$ in common, is the geodesic segment $g(x, z) \subset T$. \end{defn}

There are multiple definitions of a tree space. However, they are all connected via their metrics. \citet{tree0} tells us that a metric space is $0$-hyperbolic if and only if it is an $\mathbb{R}$-tree or a tree space. This result lets us immediately conclude that Definitions \ref{def:metricgraph} and \ref{def:rtree} are equivalent. Similarly, Definition \ref{def:APSPmetric} implies that Definition \ref{def:distree} and \ref{def:metricgraph} are equivalent. Hence all three definitions of tree spaces are equivalent. We note that trees are $0$-hyperbolic, and that $\delta = \infty$ corresponds to an arbitrary metric. Thus, $\delta$ is a heuristic measure for how close a metric is to a tree metric. 

\textbf{Trees as Hyperbolic Representation.}
The problem that is looked at by \cite{albert, fb1, fb2} is the problem of learning hyperbolic embeddings. That is, given a metric $d$, learn an embedding $X$ in some hyperbolic space $\mathbb{H}^k$. We, however, are proposing that if we want to learn a hyperbolic embedding, then we should instead learn a tree. In many cases, we can think of this tree as the hyperbolic representation. However, if we do want coordinates, this can be done as well. 

\citet{albert} give an algorithm that is a modification of the algorithm in \citet{sarkar} that can, in linear time, embed any weighted tree into $\mathbb{H}^k$ with arbitrarily low distortion (if scaling of the input metric is allowed). The analysis in \citet{albert} quantifies the trade-offs amongst the dimension $k$, the desired distortion, the scaling factor and the number of bits required to represent the distances in $\mathbb{H}^k$. We use these results to consider trees as hyperbolic representations. One possible drawback of this approach is that we may need a large number of bits of precision. Recent work, however, such as \citet{NIPS2019_8476} provides a solution to this issue. 
%!TEX root = ./main.tex
\section{Tree Representation}
\label{sec:treerep}

\begin{figure*}[!htb]
\centering
\subfigure[$\hat{T}$ when $\pi x = z$ and $(z,x)_w = (z,y)_w < d(w,z)$.]{\label{fig:a}\includegraphics[width=0.23\textwidth]{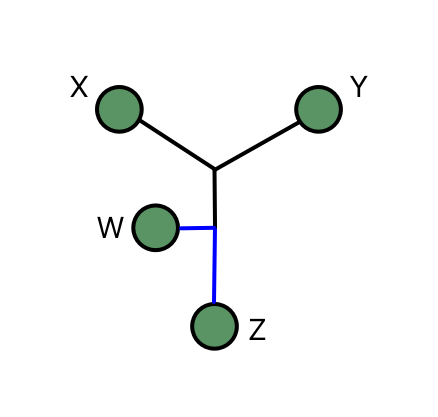}}\hfill
\subfigure[$\hat{T}$ when $\pi x = z$ and $(z,x)_w = (z,y)_w = d(w,z)$.]{\label{fig:e}\includegraphics[width=0.23\textwidth]{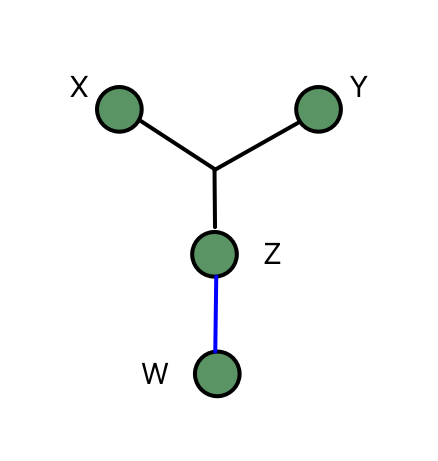}}\hfill
\subfigure[$\hat{T}$ when  $(y,x)_w = (y,z)_w = (x,z)_w \neq 0$]{\label{fig:h}\includegraphics[width=0.23\textwidth]{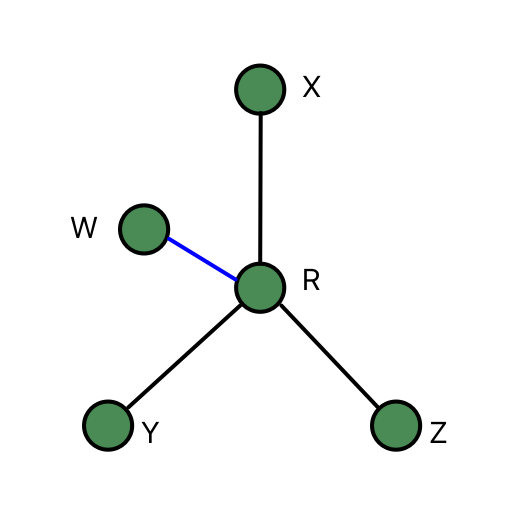}}\hfill
\subfigure[Universal Tree on  $x,y,z$.]{\label{fig:basicTree}\includegraphics[width=0.23\textwidth, height = 3.5cm]{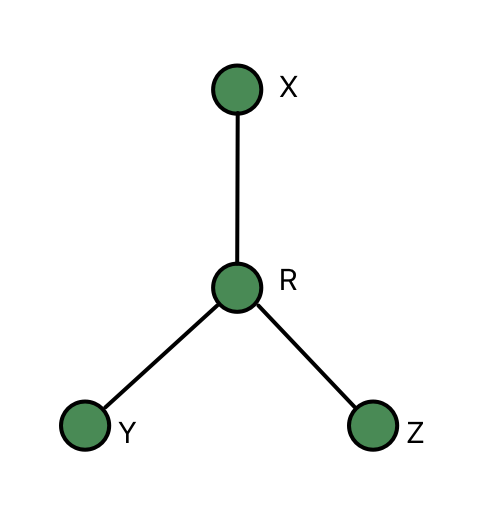}}\hfill
\caption{Figures showing the tree $\hat{T}$ from Lemma \ref{lem:structure} for $Zone_2(z)$ (a),  $Zone_1(z)$ (b), $Zone_1(r)$ (c), and the Universal tree (d).}
\label{fig:3trees}
\end{figure*}

To solve Problem \ref{problem} we present an algorithm \textsc{TreeRep} such that Theorem \ref{thm:metriconstruct} holds. 

\begin{thm} \label{thm:metriconstruct} 
Given $(X,d)$, a $\delta$-hyperbolic metric space, and $n$ points $x_1, \hdots, x_n \in X$, \textsc{TreeRep} returns a tree $(T,d_T)$. In the case that $\delta = 0$, $d_T = d$, and $T$ has the fewest possible nodes. \textsc{TreeRep} has a worst case run time $O(n^2)$. Furthermore the algorithm is embarrassingly parallelizable.  
\end{thm}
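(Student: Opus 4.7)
The theorem bundles three claims — exact recovery when $\delta=0$, minimality of the node count, and the $O(n^2)$ running time with parallelism — and I would establish them in that order, building on the structural Lemma~\ref{lem:structure} that is illustrated in Figure~\ref{fig:3trees}.

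For exact recovery when $\delta=0$, the plan is induction on the number of inserted points. The base case is the universal tree on a triple $x,y,z$ (Figure~\ref{fig:basicTree}): placing a Steiner vertex $r$ at distances $(y,z)_x$, $(x,z)_y$, $(x,y)_z$ from the three leaves yields a tree whose induced metric agrees with $d$ on $\{x,y,z\}$, since the three Gromov products are precisely the unique solution of the three linear equations $d_T(x,y)=d(x,y)$, $d_T(x,z)=d(x,z)$, $d_T(y,z)=d(y,z)$. For the inductive step I invoke Lemma~\ref{lem:structure}: given any fourth point $w$ and the current reference triple $x,y,z$, the induced tree on $\{w,x,y,z\}$ in a $0$-hyperbolic space must take one of the three forms depicted in Figures~\ref{fig:a}, \ref{fig:e}, \ref{fig:h} (or a symmetric variant), with the zone determined by the ordering of $(x,y)_w, (x,z)_w, (y,z)_w$. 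The four-point condition for $0$-hyperbolic metrics guarantees that two of these Gromov products are equal and no larger than the third, which unambiguously identifies the zone; the edge lengths and, when needed, the location of a new Steiner point are then read off from the Gromov products. Recursing into each zone with the appropriate reference triple (the zone's anchor edge endpoints plus $w$) and invoking the inductive hypothesis, each inserted point lies at the distance from the tree prescribed by $d$, so $d_T=d$ upon termination.

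For minimality, I would argue contrapositively: every Steiner point created by \textsc{TreeRep} corresponds to a strict inequality among the three Gromov products in the structural lemma, which forces at least three distinct geodesic directions to emanate from that point in \emph{any} tree realizing $d$. Conversely, whenever two Gromov products tie, the algorithm identifies the new point with an existing vertex rather than spawning a node, which avoids unnecessary Steiner vertices. Hence the output tree contains exactly the Steiner points forced by branching constraints of $d$.

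For the runtime, fixing an initial triple costs $O(1)$, and classifying each remaining point into a zone requires three Gromov-product evaluations, i.e.\ $O(1)$ per point per level of recursion. Because the zones partition the remaining points, the total work at each level is $O(n)$, and the recursion depth is at most $n$, yielding $O(n^2)$ in the worst case. Parallelism is immediate: the recursive calls on distinct zones act on disjoint point sets and disjoint regions of the evolving tree, so they can be executed concurrently. The main obstacle I anticipate is Lemma~\ref{lem:structure} itself — a careful case analysis of the orderings of $(x,y)_w, (x,z)_w, (y,z)_w$, together with the degenerate equalities that make $w$ coincide with either a leaf or with the existing Steiner vertex, so that both the topological attachment and the edge-length formulas are consistent with the inductive hypothesis and with the minimality bookkeeping.
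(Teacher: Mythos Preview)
Your overall architecture matches the paper's --- induction for exact recovery, a Steiner-node counting argument for minimality, and a charging argument for the $O(n^2)$ bound --- but there is a real gap in the inductive step for $\delta=0$.

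You write that after classifying each remaining point into its zone and recursing, ``invoking the inductive hypothesis, each inserted point lies at the distance from the tree prescribed by $d$, so $d_T=d$ upon termination.'' The inductive hypothesis, applied to each zone separately, only certifies that distances \emph{within} a zone (and to that zone's reference triple) are correct. It says nothing about the distance between a point $w$ in one zone and a point $\hat w$ in a different zone. To conclude $d_T(w,\hat w)=d(w,\hat w)$ you need an additional structural fact: that in the true $0$-hyperbolic space the geodesic $g(w,\hat w)$ passes through a specific anchor vertex already placed in $T$ (namely $x$ if $w\in Zone_1(x)$, or the Steiner node $r$ if $w\in Zone_2(x)$). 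This is exactly the content of Lemma~\ref{lem:consistency} in the paper, and it is what lets you write $d(w,\hat w)=d(w,x)+d(x,\hat w)$ (or the analogous identity through $r$) and then apply the inductive hypothesis to each summand. Without this cross-zone gluing lemma, your induction does not close: you have seven correct subtrees but no argument that splicing them at $x,y,z,r$ preserves all pairwise distances.

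Your minimality and runtime sketches are fine and essentially the paper's. One minor note on parallelism: the paper emphasizes that the zone \emph{classification} step (computing three Gromov products for each remaining point) is embarrassingly parallel across points, whereas you emphasize that the seven recursive calls are independent. Both are correct observations; the former is what drives the $O(n^2/K)$ claim for $K$ threads.
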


\begin{rem} In practice, we see that the run time for \textsc{TreeRep} is much faster than $O(n^2)$. 
\end{rem}

To better understand the geometric insights used to develop \textsc{TreeRep}, we first focus on the problem of reconstructing the tree structure from a tree metric. Algorithm \ref{alg:tree_struct2-main} and \ref{alg:tree_struct_recurse-main} present a high level version of the pseudo-code. The complete pseudo-code for \textsc{TreeRep} is presented in Appendix \ref{sec:code}.

\textsc{TreeRep} is a recursive, divide and conquer algorithm. The first main idea (Lemma \ref{lem:universal}) is that for any metric $d$ on three points, we can construct a tree $T$ with four nodes such that $d_T = d$. We will call such trees \emph{universal trees}. The second main idea (Lemma~\ref{lem:structure}) is that adding a fourth point to a universal tree can be done consistently and, more importantly, the additional point falls into one of seven different zones. Thus, \textsc{TreeRep} will first create a universal tree $T$ and then will sort the remaining data points into the seven different zones. We will then do recursion with each of the zones. %Finally, we show in Lemma \ref{lem:consistency} that for each node we add in a zone, the distances to all the other (non-universal) nodes in other zones are consistent, so we simply have to maintain consistency amongst the points within each zone. That is, it's sufficient to maintain local distance consistency.

\begin{lem} \label{lem:universal} 
Given a metric $d$ on three points $x,y,z$, there exists a (weighted) tree $(T,d_T)$ on four nodes $x,y,z,r$, such that $r$ is adjacent to $x,y,z$, the edge weights are given by $d_T(x,r) = (y,z)_x$, $d_T(y,r) = (x,z)_y$ and $d_T(z,r) = (x,y)_z$, and the metric $d_T$ on the tree agrees with $d$. 
\end{lem}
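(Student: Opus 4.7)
The plan is to construct the required tree explicitly as a star: take a new internal node $r$ joined to each of $x,y,z$ by a single edge, with weights prescribed by the three Gromov products in the statement. With this combinatorial structure fixed, all shortest paths in $T$ between $x,y,z$ are forced to pass through $r$, so verifying $d_T = d$ reduces to a handful of direct sums of Gromov products. The argument therefore has two substantive parts: (i) checking that the edge weights are legitimate (nonnegative), and (ii) checking the three pairwise distance identities.

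For (i), the weight $d_T(x,r) = (y,z)_x = \tfrac{1}{2}\bigl(d(x,y) + d(x,z) - d(y,z)\bigr)$ is nonnegative precisely because $d$ satisfies the triangle inequality $d(y,z) \le d(x,y) + d(x,z)$, and the same applies symmetrically to the other two weights. So the star $T$ with these weights is a well-defined weighted tree.

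For (ii), since $r$ is the only internal node and the tree is a star, the shortest-path distance between any two leaves is simply the sum of the two incident edge weights. For example,
\begin{align*}
d_T(x,y) &= (y,z)_x + (x,z)_y \\
&= \tfrac{1}{2}\bigl(d(x,y) + d(x,z) - d(y,z)\bigr) + \tfrac{1}{2}\bigl(d(x,y) + d(y,z) - d(x,z)\bigr) \\
&= d(x,y),
\end{align*}
and the cyclic analogues $d_T(y,z) = d(y,z)$ and $d_T(x,z) = d(x,z)$ follow by the same telescoping cancellation. This completes the verification.

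There is no real obstacle here beyond careful bookkeeping; the only subtle point is the degenerate case in which one Gromov product, say $(y,z)_x$, equals zero. Then the edge $xr$ has length zero and $r$ is effectively identified with $x$, giving a tree on three nodes instead of four. The metric identities above still hold verbatim, so one may either collapse the edge or keep it as a zero-length edge to retain the claimed node count; the statement is unaffected.
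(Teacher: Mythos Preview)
Your proof is correct and follows essentially the same approach as the paper: construct the star on $x,y,z,r$ with the stated Gromov-product edge weights and verify directly that $d_T(x,y) = (y,z)_x + (x,z)_y = d(x,y)$ (and cyclically). Your additional remarks on nonnegativity of the weights and the degenerate zero-length-edge case are more careful than the paper's own proof, which omits these points.
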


\begin{defn} \label{def:universal} 
The tree constructed in Lemma \ref{lem:universal} is the universal tree on the three points $x,y,z$. The additional node $r$ is known as a Steiner node.  \end{defn}   

An example of the universal tree can be seen in Figure \ref{fig:basicTree}. To understand the distinction between the seven different zones, we need to reinterpret Equation \ref{eq:hyp}. We know that for any tree metric, and any four points $w,x,y,z$, we have that 
\[ 
    (x,y)_w \ge \min\big( (x,z)_w, (y,z)_w\big). 
\]
This inequality implies that the smaller two of the three numbers $(x,y)_w, (x,z)_w$, and $(y,z)_w$ are equal. In this case, knowing which of the quantities are equal tells us the structure of the tree. Specifically, here $x,y,z$ will be the three points in our universal tree $T$ and $w$ will be the point that we want to sort. Then initially, we have four possibilities. The first possibility is that all three Gromov products are equal. This case will define its own zone. If this is not the case, then we have three possibilities depending on which two out of the three Gromov products are equal. Suppose we have that $
(x,y)_w = (x,z)_w$, then due to the triangle inequality, we have that $d(w,x) \ge (x,y)_w$. Thus, we will further subdivide this case into two more cases, depending on whether  $d(w,x) = (x,y)_w$ or $d(w,x) > (x,y)_w$. Each of these cases will define their own zone.  Examples of the different cases can be seen in Figure \ref{fig:3trees}. We can also see that there are two different types of zones. The first type is when we connect the new node directly to an existing node as seen in Figures \ref{fig:e} and \ref{fig:h}. The second type is when we connect $w$ to an edge as seen in Figure \ref{fig:a}. The formal definitions for the zones can be seen in Definition \ref{defn:zones}.

%This brings us to the analysis of the second main idea, local consistency. Lemmas \ref{lem:structure}, and \ref{lem:consistency}, tell us that by using the information provided by the Gromov products, we can sort our data points into zones associated with the nodes of the universal tree, so that we only need to check local metric consistency to ensure global consistency.

\begin{lem}\label{lem:structure} Let $(X,d)$ be a tree space. Let $w,x,y,z$ be four points in $X$ and let $(T,d_T)$ be the universal tree on $x,y,z$ with node $r$ as the Steiner node. Then we can extend $(T,d_T)$ to $(\hat{T},d_{\hat T})$ to include $w$ such that $d_{\hat T} = d$.
\end{lem}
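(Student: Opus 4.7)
The plan is to reduce the lemma to the fundamental fact cited from \citet{tree0}: a metric space is $0$-hyperbolic if and only if it is a tree space. Applying this to the finite subspace $\{w,x,y,z\}\subseteq X$, there exists a weighted tree $\hat T^\star$ having $w,x,y,z$ as (some of) its vertices and at most two additional Steiner vertices, such that the shortest-path metric of $\hat T^\star$ recovers $d$ on these four points. The lemma then follows once I verify that $T$ sits inside $\hat T^\star$ as the sub-tree spanned by $\{x,y,z\}$, so that $\hat T^\star$ is already the desired extension.

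To see this, note that the restriction of the tree metric of $\hat T^\star$ to $\{x,y,z\}$ is itself a tree metric on three points, realised by the subtree of $\hat T^\star$ spanned by those three points. Such a subtree has a unique internal Steiner node (the tree median of $\{x,y,z\}$ inside $\hat T^\star$) whose distances to $x,y,z$ are exactly the three Gromov products $(y,z)_x$, $(x,z)_y$, $(x,y)_z$. By Lemma~\ref{lem:universal} this is precisely the universal tree $T$, with $r$ identified with the median. Setting $\hat T := \hat T^\star$ therefore extends $T$ as required.

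To match the geometric description used by \textsc{TreeRep}, I would additionally identify the attachment point $r'$ of $w$ inside $\hat T^\star$, i.e.\ the metric projection of $w$ onto $T$. Since $T = g(r,x)\cup g(r,y)\cup g(r,z)$, the point $r'$ lies on exactly one of these three segments, and is either at an endpoint ($x$, $y$, $z$ or $r$) or strictly interior to the segment. Counting the $r'=r$ case once, the eight combinations collapse to the seven zones used later in Definition~\ref{defn:zones} and Figure~\ref{fig:3trees}. Which zone $w$ falls into, and the weights of the new edges, are read off from which pair among $(x,y)_w,(x,z)_w,(y,z)_w$ is minimal and how that minimal Gromov product compares to the three distances $d(w,x),d(w,y),d(w,z)$.

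The main obstacle will be the case analysis. The inequality in Equation~\ref{eq:hyp} at $\delta=0$ forces the two smallest of $(x,y)_w,(x,z)_w,(y,z)_w$ to be equal, which is what guarantees that exactly one of the three branches of $T$ contains $r'$; the further split into interior-attachment versus endpoint-attachment then follows from comparing the common minimal Gromov product with $d(w,\cdot)$ at the nearest of $x,y,z$. Verifying in each configuration that the proposed edge weights are nonnegative and that the resulting shortest-path distances from $w$ to $x,y,z,r$ agree with $d$ is a direct calculation using the identity $d(w,p)=d(w,r')+d(r',p)$ valid for all $p\in T$ in any tree space.
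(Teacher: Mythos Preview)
Your existence argument in the first two paragraphs is correct and is genuinely different from the paper's. You invoke the characterisation from \citet{tree0} on the finite $0$-hyperbolic subspace $\{w,x,y,z\}$ to obtain a tree $\hat T^\star$ realising $d$, and then observe that the subtree of $\hat T^\star$ spanned by $\{x,y,z\}$ is forced---by the uniqueness in Lemma~\ref{lem:universal}---to be $T$, with $r$ identified to the tree median. That already proves the lemma as stated.

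The paper does not appeal to this abstract characterisation at all. Its proof is entirely constructive: it enumerates four explicit configurations of the triple $(x,y)_w,(x,z)_w,(y,z)_w$ (all zero; all equal and positive; two equal and strictly smaller with $d(w,\pi x)=(\pi x,\pi y)_w$; two equal and strictly smaller with $d(w,\pi x)>(\pi x,\pi y)_w$), prescribes in each case exactly how to attach $w$ to $T$ and with which edge weight, and then checks $d_{\hat T}(w,\cdot)=d(w,\cdot)$ by direct Gromov-product arithmetic. Along the way it derives the formula $d(w,r)=(y,z)_w$ in the $x$-branch case, which is reused both in the algorithm and in the proof of Lemma~\ref{lem:consistency}. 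Your route is shorter for bare existence; the paper's route is what actually powers \textsc{TreeRep}, since it specifies the zone and the weights.

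One concrete slip in your sketch of the case analysis: the vertex $\pi x$ whose two Gromov products are the common minimum is \emph{not} in general ``the nearest of $x,y,z$'' to $w$. For instance, take $d(r,x)=10$, $d(r,y)=d(r,z)=1$, place $r'$ on the $x$-edge at distance $1$ from $r$, and put $w$ at distance $\tfrac12$ from $r'$; then $(x,y)_w=(x,z)_w<(y,z)_w$ so $\pi x=x$, yet $d(w,y)=d(w,z)=2.5<9.5=d(w,x)$. The correct criterion is the one the paper uses: compare the common minimal Gromov product with $d(w,\pi x)$, where $\pi x$ is singled out by which pair of products is minimal, not by proximity.
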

%The proof of Lemma~\ref{lem:structure} (in Appendix \ref{sec:proofs}) shows that there are a number of ways to extend $T$ to include the new point $w$.  To clarify our discussion of the extension of $T$ and to extend our analysis to the general \textsc{TreeRep} algorithm, we introduce new terminology (such conditions are in the proof of Lemma \ref{lem:structure}).

\begin{defn} \label{defn:zones} Given a data set $V$ (consisting of data points, along with the distances amongst the points), a universal tree $T$ on $x,y,z \in V$ (with $r$ as the Steiner node), let us define the following two zone types. 
\begin{enumerate}[nosep]
    \item The definition for zones of type one is split into the following two cases. \begin{enumerate}
    \item $Zone_1(r) = \{ w \in V : (x,y)_w = (y,z)_w = (z,x)_w \}$
    \item For a given permutation $\pi$ on $\{x,y,z\}$, $Zone_1(\pi x) = \{ w \in V : (\pi x, \pi y)_w = (\pi x, \pi z)_w = d(w, \pi x) \}$
    \end{enumerate}
    \item For a given permutation $\pi$ on $\{x,y,z\}$, $Zone_2(\pi x) = \{ w \in V : (\pi x, \pi y)_w = (\pi x, \pi z)_w < d(w, \pi x)\}$
\end{enumerate}
\end{defn}

Using this terminology and our structural lemmas, we can describe a recursive algorithm that reconstructs the tree structure from a $0$-hyperbolic metric. Given a data set $V$ we pick three random points $x,y,z$ and construct the universal tree $T$. Then for all other $w \in V$, sort the $w$'s into their respective zones. Then for each of the seven zones we can recursively build new universal trees. For zones of type 1, pick any two points, $w_{i_1}, w_{i_2}$ and form the universal tree for $\pi x$ (or $r$),$w_{i_1}, w_{i_2}$. If there is only one node in this zone, connect it to $\pi x$ (or $r$). For zones of type 2, pick any one point, $w_{i_1}$ and form the universal tree for $\pi x,w_{i_1}, r$. Note that during the recursive step for zones of type 2, we create universal trees with Steiner nodes $r$ as one of the nodes. Hence we need to compute the distance from $r$ to all other nodes sent to that zone. We can calculate this when we first place $r$. Concretely, if $r$ is the Steiner node for the universal tree $T$ on $x,y,z$, then for any $w$, we will have that $d(w,r) = \max((x,y)_z,(y,z)_x,(z,x)_y)$. The proof for the consistency of this formula is in the proof of Lemma \ref{lem:structure}. 

Finally, to complete the analysis, the following lemma proves that we only need to check consistency of the metric within each zone to ensure global consistency. 

\begin{lem} \label{lem:consistency} Given $(X,d)$ a metric tree, and a universal tree $T$ on $x,y,z$, we have the following 
\begin{enumerate}[nosep]
\item \label{lem:zone1} If $w \in Zone_1(x)$, then for all $\hat{w} \not\in Zone_1(x)$, we have that $x \in g(w,\hat{w})$. 
\item \label{lem:zone2} If $w \in Zone_2(x)$, then for all $\hat{w} \not\in Zone_i(x)$ for $i=1,2$, we have that $r \in g(w,\hat{w})$. 
\end{enumerate} \end{lem}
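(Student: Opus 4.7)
The plan is to translate each zone condition into a statement about the projection $p(w)$ of each data point $w \in V$ onto the universal subtree $T \subset X$, and then derive the separation property from the tree structure. Since $(X,d)$ is $0$-hyperbolic, hence an $\mathbb{R}$-tree by \cite{tree0}, the universal tree $T$ on $\{x,y,z\}$ (with Steiner node $r$) embeds isometrically in $X$, and any three points $a,b,c \in X$ admit a unique median $m(a,b,c)$ satisfying $(a,b)_c = d(c, m(a,b,c))$. Using these, I would first establish the dictionary: $w \in Zone_1(\pi x)$ iff $p(w) = \pi x$; $w \in Zone_1(r)$ iff $p(w) = r$; and $w \in Zone_2(\pi x)$ iff $p(w)$ lies strictly between $\pi x$ and $r$ on the edge $g(r, \pi x)$. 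The key step is that $(x,y)_w = (x,z)_w$ forces the medians $m(w,x,y)$ and $m(w,x,z)$ to coincide (they lie on the geodesic $g(w,x)$ at the same distance $(x,y)_w$ from $w$), and this common point necessarily lies on $g(x,y) \cap g(x,z) = g(x,r)$; equality with $d(w,x)$ pins it to $x$, while strict inequality places it strictly inside the edge $(r,x)$.

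For Part 1, assume $w \in Zone_1(x)$, so $p(w) = x$ and both $x \in g(w,y)$ and $x \in g(w,z)$. I would argue by contradiction: if $x \notin g(w, \hat w)$, then $(w, \hat w)_x > 0$, which in an $\mathbb{R}$-tree is equivalent to $g(x,w)$ and $g(x, \hat w)$ sharing a non-trivial initial segment leaving $x$. Because $x \in g(w,y)$, the geodesic $g(x,y)$ leaves $x$ in a direction different from $g(x,w)$, hence also from $g(x, \hat w)$, giving $x \in g(\hat w, y)$. The identical argument with $z$ in place of $y$ yields $x \in g(\hat w, z)$, placing $\hat w \in Zone_1(x)$ and contradicting the hypothesis.

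For Part 2, let $s := p(w)$, which lies strictly between $r$ and $x$ on $g(r,x)$. The hypothesis $\hat w \notin Zone_1(x) \cup Zone_2(x)$ gives $p(\hat w) \notin g(r,x) \setminus \{r\}$: $p(\hat w)$ is either $r$ itself or lies on (or beyond) $g(r,y)$ or $g(r,z)$. In every such case the unique geodesic from $p(\hat w)$ to $s$ must traverse the entire segment from $r$ to $s$, and hence passes through $r$. Since $g(w, \hat w)$ decomposes (by the projection property) as $g(w, s) \cup g(s, p(\hat w)) \cup g(p(\hat w), \hat w)$, we conclude $r \in g(w, \hat w)$.

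The main obstacle will be pinning down the zone-to-projection dictionary rigorously; in particular, verifying that $(x,y)_w = (x,z)_w < d(w,x)$ combined with $w \notin Zone_1(r)$ forces $p(w)$ to lie strictly on the open edge $(r,x)$ rather than at $r$, and that points outside $Zone_1(x) \cup Zone_2(x)$ necessarily have projections off $g(r,x) \setminus \{r\}$. Under the natural convention (implicit in the algorithm) that $Zone_1(r)$ takes precedence over the $Zone_2(\cdot)$'s, this is immediate. Once the dictionary is set up, both parts reduce to the standard fact that removing a single point from an $\mathbb{R}$-tree disconnects it into subtrees, so any geodesic crossing between subtrees must pass through that point.
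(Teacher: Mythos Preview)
Your argument is correct and takes a genuinely different route from the paper. The paper proves both parts by an explicit case analysis on the zone containing $\hat w$ (e.g., $\hat w \in Zone_1(y)$, $\hat w \in Zone_2(y)$, $\hat w \in Zone_2(x)$, etc.), in each case invoking the computations from Lemma~\ref{lem:structure} to locate the relevant geodesics and then applying the disconnection Lemma~\ref{lem:disconnect} directly. By contrast, you first set up a uniform dictionary identifying zone membership with the nearest-point projection $p(w)$ onto the tripod $T$, and then both parts become one-line consequences of the standard projection/decomposition facts for $\mathbb{R}$-trees. Your Part~1 contradiction argument (showing $\hat w$ would be forced into $Zone_1(x)$) is cleaner than the paper's three-case check, and your Part~2 reduces to the observation that $p(w)$ and $p(\hat w)$ lie on different legs of the tripod, so the geodesic between them traverses $r$. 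What the paper's approach buys is that it stays entirely within the Gromov-product calculus already developed in Lemmas~\ref{lem:contract} and~\ref{lem:structure}, with no need to introduce medians or projections; what yours buys is a structural picture that makes the result almost immediate once the dictionary is in place and that would scale more gracefully to further zones or deeper recursion levels. Your caveat about the $Zone_1(r)$ precedence convention is well placed: the paper's Definition~\ref{defn:zones} as written allows overlap between $Zone_1(r)$ and $Zone_2(\pi x)$ in the all-equal case, and the algorithm resolves this by checking $Zone_1(r)$ first, exactly as you assume.
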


\textbf{TreeRep for General $\delta$-Hyperbolic Metrics.}
\label{sec:treerepapprox}
Having seen the main geometric ideas behind \textsc{TreeRep}, we want to extend the algorithm to return an approximating tree for any given metric. 
For an arbitrary $\delta$-hyperbolic metric, Lemma \ref{lem:structure} does not hold. We can, however, modify it and leverage the intuition behind the original proof. 
Given four points $w,x,y,z$, we do not satisfy one of the conditions of Lemma \ref{lem:structure}, if all three Gromov products $(x,y)_w,(x,z)_w,(y,z)_w$ have distinct values. Nevertheless, we can still compute the maximum of these three quantities. Furthermore, since we have a $\delta$-hyperbolic metric, the smaller two products will be within $\delta$ of each other. Let us suppose that $(x,y)_w$ is the biggest. Then we place $w$ in $Zone_1(x)$ if and only if $d(z,w) = (y,z)_w$ \emph{or} $d(z,w) = (x,z)_w$.  Otherwise we place $w \in Zone_2(x)$. Note that when we have tree metric, we have that $d(z,w) = (y,z)_w$ if and only if $d(z,w) = (x,z)_w$. 

As shown by Proposition \ref{prop:dist}, when we do this, we are introducing a distortion of at most $\delta$ between $w$ and $y,z$. This suggests that when we do zone 2 recursive steps, we should pick the node that closest to $r$ as the third node for the universal tree. We see experimentally that this significantly improves the quality of the tree returned. Note, we do not have a global distortion bound for when the input is a general $\delta$-hyperbolic metric. However, as we will see experimentally, we tend to produce trees with low distortion. 

\begin{prop} \label{prop:dist} Given a $\delta$-hyperbolic metric $d$, the universal tree $T$ on $x,y,z$ and a fourth point $w$, when sorting $w$ into its zone ($zone_i(\pi x)$), \textsc{TreeRep} introduces an additive distortion of at most $\delta$ between $w$ and $\pi y, \pi z$. \end{prop}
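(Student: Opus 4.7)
The plan is to reduce the proposition to a short computation whose only nontrivial input is the $\delta$-hyperbolic inequality. I would first apply Equation \ref{eq:hyp} with basepoint $w$ to the triple $(x, y, z)$: since this inequality forces the two smaller of the three Gromov products at $w$ to lie within $\delta$ of each other, after relabeling by the permutation $\pi$ so that the maximum among $(x,y)_w, (x,z)_w, (y,z)_w$ is the one pairing $\pi y$ with $\pi z$, I obtain the quantitative bound $|(\pi x, \pi y)_w - (\pi x, \pi z)_w| \leq \delta$. This is the sole place where the hyperbolic hypothesis enters.

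Next I would unpack what the algorithm actually does when it extends the universal tree on $x, y, z$ to include $w$ in some $zone_i(\pi x)$: a single new edge is added connecting $w$ to the universal tree (either at the vertex $\pi x$ for Zone 1 or along the edge from $\pi x$ to the Steiner node $r$ for Zone 2), with edge weight $e$ determined by $d(w, \pi x)$ and the two smaller Gromov products according to the sorting rule described just before the proposition. Because the pairwise universal-tree distances among $\pi x, \pi y, \pi z$ match the original metric by Lemma \ref{lem:universal}, the extended tree satisfies
\[
d_{\hat T}(w, \pi y) = e + d(\pi x, \pi y), \qquad d_{\hat T}(w, \pi z) = e + d(\pi x, \pi z).
\]
Substituting the elementary Gromov-product identity $d(w, q) = 2(\pi x, q)_w + d(\pi x, q) - d(w, \pi x)$ for $q \in \{\pi y, \pi z\}$, each discrepancy $d_{\hat T}(w, q) - d(w, q)$ collapses to $e + d(w, \pi x) - 2(\pi x, q)_w$. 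The zone-sorting rule is designed precisely so that this linear form ties $e + d(w, \pi x)$ to the two Gromov products symmetrically; invoking the first step, both discrepancies are then at most $\delta$ in absolute value.

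The main obstacle I anticipate lies in the $Zone_2(\pi x)$ case, where $w$ is attached along the interior of an edge of the universal tree rather than at a vertex, so that the edge weight also involves the Steiner-node distance $d(w, r) = \max((x,y)_z, (y,z)_x, (z,x)_y)$ stated after Lemma \ref{lem:structure}. One must verify that this $\max$ formula remains consistent up to $\delta$ in the hyperbolic setting and that the resulting attachment still satisfies the symmetric form used in the Zone 1 computation; once this is done, the two zone cases run in parallel and the additive bound of $\delta$ follows uniformly.
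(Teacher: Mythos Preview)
Your reduction to the inequality $|(\pi x,\pi y)_w-(\pi x,\pi z)_w|\le\delta$ is exactly the endpoint of the paper's proof, so the strategy is right. The difference is in how you arrive there. You route the tree path from $w$ to $\pi y$ through the vertex $\pi x$, writing $d_{\hat T}(w,\pi y)=e+d(\pi x,\pi y)$; the paper instead routes it through the Steiner node $r$, using the fact that the algorithm explicitly stores $d_T(w,r)=(\pi y,\pi z)_w$ (the maximum Gromov product) during the sorting step. With that choice the computation is a single chain
\[
|d_T(w,\pi y)-d(w,\pi y)|=|(\pi y,\pi z)_w+(\pi x,\pi z)_{\pi y}-d(w,\pi y)|=|(\pi x,\pi y)_w-(\pi x,\pi z)_w|\le\delta,
\]
valid uniformly for both zones, because in either case the tree path from $w$ to $\pi y$ passes through $r$.

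Your detour through $\pi x$ is precisely what creates the Zone~2 obstacle you flag: in $Zone_2(\pi x)$ the attachment point $\hat r$ lies strictly between $\pi x$ and $r$, so the tree path $w\to\hat r\to r\to\pi y$ does \emph{not} pass through $\pi x$, and the identity $d_{\hat T}(w,\pi y)=e+d(\pi x,\pi y)$ fails. You can repair this by computing $d_{\hat T}(w,r)$ in that case and observing that it again equals $(\pi y,\pi z)_w$, but at that point you have rediscovered the paper's route. So the anticipated obstacle is real but self-inflicted: routing through $r$ from the start dissolves it and collapses the two cases into one.
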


\begin{algorithm}[!ht]
\caption{Metric to tree structure algorithm.}
\label{alg:tree_struct2-main}
	\begin{algorithmic}[1]
	\Function {\textsc{Tree structure}}{X, $d$}
		\State $T = (V,E,d') = \emptyset $ \quad
		\State Pick any three data points uniformly at random $x,y,z \in X$. 
		\State $T$ = \textsc{recursive\_step}($T,X,x,y,z,d,d_T,$)
		\State \textbf{return} $T$
	\EndFunction 
	\Function {\textsc{recursive\_step}}{$T,X,x,y,z,d,d_T,$}
		\State Construct universal tree for $x,y,z$ and sort the other nodes into the seven zones.
	\State Recurse for each of the seven zones by calling \textsc{Zone1\_Recursion} and \textsc{Zone2\_recursion}.
	\Return $T$
	\EndFunction
	\end{algorithmic}
\end{algorithm}

\begin{algorithm}[!htb]
\caption{Recursive parts of TreeRep.}
\label{alg:tree_struct_recurse-main}
	\begin{algorithmic}[1]
	\Function {\textsc{zone1\_recursion}}{$T$, $d_T$, $d$, $L$, $v$}
		\If{Length($L$) == 0}
			\Return$T$
		\EndIf
		\If{Length($L$) == 1}
			\State Let $u$ be the one element in $L$ and add edge $(u, v)$ to $E$ with weight $d_T(u,v) = d(u,v)$ 
			\State \textbf{return} $T$
		\EndIf
		\State Pick any two $u, z$ from $L$ and remove them from $L$
		\State \textbf{return} \textsc{recursive\_step}($T,L,v,u,z,d$, $d_T$)
	\EndFunction 
	
	\Function {\textsc{zone2\_recursion}}{$T$, $d_T$, $d$, $L$, $u$, $v$}
		\If{Length($L$) == 0}
			\Return $T$
		\EndIf
		\State Set $z$ to be the closest node to $v$ and delete edge $(u,v)$
		\State \textbf{return:} \textsc{recursive\_step}($T,L,v,u,z,d$, $d_T$)
	\EndFunction
	\end{algorithmic}
\end{algorithm}

\textbf{Steiner nodes.} A Steiner node is any node that did not exist in the original graph that one adds to it. We give a simple example to illustrate that Steiner nodes are necessary for reconstructing the correct tree. Additionally, we demonstrate that forming a graph and then computing any spanning tree (as done in \cite{alon,elkin,prim}) will not recover the tree structure. Consider 3 points $x,y,z$ such that all pairwise distances are equal to $2$. Then, the associated graph is a triangle and any spanning tree is a path. Then, the distance between the endpoints of the spanning tree is not correct; it has been distorted or stretched. The ``correct'' tree is obtained by adding a new node $r$ and connecting $x,y,z$ to $r$, and making all the edge weights equal to $1$. Thus, we need Steiner nodes when reconstructing the tree structure. Methods such as MST and LS \cite{alon} that do not add Steiner nodes will not produce the correct tree, when given a 0-hyperbolic metric, even though such algorithms do come with upper bounds on the distortion of the distances. In this setting, we want to obtain a tree that as accurately as possible represents the metric even at the cost of additional nodes; we do not simply want a tree that is a subgraph of a given graph. 

% \begin{figure*}[!htb]
% \centering
% \subfigure[Weighted graph from metric.]{\label{fig:a-example}\includegraphics[width=0.23\textwidth]{3.png}}\hfill
% \subfigure[MST]{\label{fig:b-example}\includegraphics[width=0.23\textwidth]{5.png}}\hfill
% \subfigure[Solve for closest weights with $L_2$ penalty.]{\label{fig:c-example}\includegraphics[width=0.23\textwidth]{4.png}}\hfill
% \subfigure[Correct tree on  $x,y,z$.]{\label{fig:d-example}\includegraphics[width=0.23\textwidth, height = 3.5cm]{6.png}}\hfill
% \caption{Figures showing the example that demonstrates the need fr Steiner nodes.}
% \label{fig:example}
% \end{figure*}

%!TEX root = ./main.tex
\section{Experiments}
\label{sec:exp}

In this section, we demonstrate the effectiveness of \textsc{TreeRep}. Additional details about the experiments and algorithms can be found in Appendix \ref{sec:expappendix}.\footnote{All code can be found at the following link \url{https://github.com/rsonthal/TreeRep}} 

For the first task of approximating metrics with tree metrics, we compare \textsc{TreeRep} against algorithms that find approximating trees; Minimum Spanning Trees (MST) \cite{prim},  \textsc{LevelTrees} (LT) \cite{levelTrees}, \textsc{Neighbor Join} (NJ) \cite{nj}, Low Stretch Trees (LS) \cite{alon,elkin}, \textsc{ConstructTree} (CT) \cite{constructTree}, and \textsc{ProbTree} (BT) \cite{bartal}. When comparing against such methods, we show that not only is \textsc{TreeRep} much faster than all of the above algorithms (except MST, and LS), but that \textsc{TreeRep} produces better quality metrics than MST, LS, LT, BT, and CT and metrics that are competitive with NJ. In addition to these methods, other methods such as UPGMA \cite{upgma} also learn tree structures. However, these algorithms have other assumptions on the data. In particular, for UPGMA, the additional assumption is that the metric is an ultrametric. Hence we do not compare against such methods. 

One important distinction between methods such as LS, MST, and LT and the rest, is that LS, MST, and LT require a graph as the input. This graph is crucial for these methods and hence sets these methods apart from the rest, as the rest only require a metric. 

For the second task of learning hyperbolic embeddings, we compare \textsc{TreeRep} against Poincare Maps (PM) \cite{fb1}, Lorentz Maps (LM) \cite{fb2}, PT \cite{albert}, and hMDS \cite{albert}. Since we can embed trees into $\mathbb{H}^k$ with arbitrarily low distortion, we think of trees as hyperbolic representations. When comparing against such methods, we show that \textsc{TreeRep} is not only four to five orders of magnitude faster, but for low dimensions, and in many high dimensional cases, produces better quality embeddings. 

We first perform a benchmark test for tree reconstruction from tree metrics. Then, for both tasks, we test the algorithms on three different types of data sets. First, we create synthetic data sets by sampling random points from $\mathbb{H}^{k}$. Second, we will take real world biological data sets that are believed to have hierarchical structure. Third, we consider metrics that come from real world unweighted graphs. In each case, we will show that \textsc{TreeRep} is an extremely fast algorithm that produces as good or better quality metrics. We will evaluate the methods on the basis of computational time, and the average distortion, as well as mean average precision (MAP) of the learned metrics.\footnote{MAP is used in \cite{fb1, fb2, albert}, while average distortion is used in \cite{albert}. The definitions are in the appendix. }

\begin{rem}
\textsc{TreeRep} is a randomized algorithm, so all numbers reported are averaged over 20 runs. The best number produced by \textsc{TreeRep} can be found in the Appendix.
\end{rem}

\textbf{Tree Reconstruction Experiments.}
\label{sec:treecon}
Before experimenting with general $\delta$-hyperbolic metrics, we benchmark our method on $0$-hyperbolic metrics. To do this, we generate random synthetic $0$-hyperbolic metrics. More details can be found in Appendix \ref{sec:expappendix}. 
Since \textsc{TreeRep} and \textsc{NJ} are the only algorithms that are theoretically guaranteed to return a tree that is consistent with the original metric, we will run this experiment with these two algorithms only. We compare the two algorithms based on their running times and the number of nodes in the trees. As we can see from Table \ref{table:recon}, \textsc{TreeRep} is a much more viable algorithm at large scales. Additionally, the trees returned by \textsc{NJ} have double the number of nodes as the original trees. Contrarily, the trees returned by \textsc{TreeRep} have exactly the same number of nodes as the original trees. %Therefore, consistent with theoretical analysis, \textsc{TreeRep} returns a tree with the fewest number of nodes that is consistent with the input metric. \textsc{TreeRep} is only known algorithm that can exactly reconstruct the input tree from its metric at scale. 

\begin{table}[!ht]
\caption{Time taken by Nj and TreeRep to reconstruct the tree structure.}
\centering
\begin{tabular}{cccccccc}
\toprule
n & 11 & 40 & 89 & 191 & 362 & 817 & 1611 \\ \midrule
TR & \cellbest 0.053 &  0.23 &\cellbest 0.0017 &\cellbest 0.0039 &\cellbest 0.02 &\cellbest 0.08 &\cellbest 0.12 \\
NJ & 0.084 & \cellbest 0.0016 & 0.0067 & 0.036 & 0.18 & 1.7 & 15 \\
\bottomrule
\end{tabular}
\label{table:recon}
\end{table}

\begin{figure}
\centering
%\subfigure[Graph showing the time in seconds by \textsc{TreeRep} and NJ to reconstruct the tree structure given a $0$-hyperbolic metric.]{\label{fig:recon}\includegraphics[width=0.3\linewidth]{figs/treecon}}\hfill
%\subfigure[Varied Dimension.]{\includegraphics[width=0.24\linewidth]{figs/Unknown-17.png}
%\label{fig:dim}}\hfill
\subfigure[Varied Dimension.]{\includegraphics[width=0.49\linewidth]{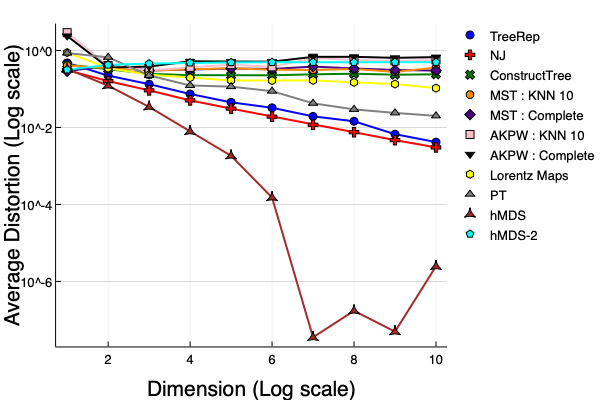}
\label{fig:dim}}\hfill
%\subfigure[Varied Scale.]{\includegraphics[width=0.23\linewidth]{figs/Unknown-19.png}
%\label{fig:scale}}\hfill
\subfigure[Varied Scale.]{\includegraphics[width=0.49\linewidth]{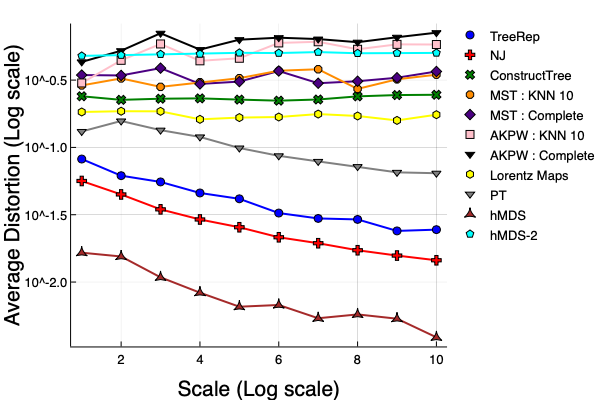}
\label{fig:scale}}
\caption{Average distortion of the metric learned for 100 randomly sampled points from $\mathbb{H}^k$ for $k = 2^i$ and from $\mathbb{H}^{10}$ for scale $s = 2^i$ for $i = 1, 2, \hdots, 10$.}
\end{figure}

\begin{table}[!ht]
\caption{Time taken by PT, LM, hMDS, to learn a 10 dimensional embedding for the synthetic data sets and average time taken by \textsc{TreeRep}  (TR), MST, and CT.}
\centering
\begin{tabular}{cccccccccc}
\toprule
 & TR   & NJ  &  MST & LS & CT  & PT & LM & hMDS & hMDS-2\\ \midrule
Time &  0.002 & 0.06 &\cellbest  0.0001 & 0.002 & 0.076  & 312 & 971 & 11.7 & 0.008 \\
\bottomrule
\end{tabular}
\label{table:hyptime}
\end{table}

\textbf{Random points on Hyperbolic Manifold.}
\label{sec:rand}
We generate two different types of data sets. First, we hold the dimension $k$ constant and scale the coordinates. Second, we hold the magnitude of the coordinates constant and increase the dimension $k$. Note these metrics do not come with an underlying graph! Hence to even apply methods such as MST, or LS we need to do some work. Hence, we create two different weighted graphs; a complete graph and a nearest neighbor graph.

For both types of data, Figures \ref{fig:dim} and \ref{fig:scale} show that as the scale and the dimension increase, the quality of the trees produced by \textsc{TreeRep} and NJ get better. Contrastingly, the quality of the trees produced by MST, \textsc{ConstructTree}, and LS do not improve. Hence we see that when we do not have an underlying sparse graph that was used to generate the metric, methods such as MST and LS do not perform well. In fact, they have the worst performance. This greater generality of possible inputs is one of the major advantages of our method. Thus, demonstrating that \textsc{TreeRep} is an extremely fast algorithm that produces good quality trees that approximate hyperbolic metrics. Furthermore, Table \ref{table:hyptime} shows that \textsc{TreeRep} is a much faster algorithm than NJ. 

For the second task of finding hyperbolic embeddings, we compare against LM, PT and hMDS. For both LM, PT, and hMDS, we compute an embedding into $\mathbb{H}^k$, where $k$ is dimension of the manifold the data was sampled from. We also use hMDS to embed into $\mathbb{H}^2$, we call this hMDS-2. We can see from Figures \ref{fig:dim} and \ref{fig:scale} that \textsc{TreeRep} produces \emph{much better} embeddings than LM, PT, and hMDS-2. Furthermore, LM and PT are extremely slow, with PT and LM taking 312 and 917 seconds on average, respectively. Thus, showing that \textsc{TreeRep} is 5 orders of magnitude faster than LM and PT, \emph{and produces better quality representations.} On the other hand, since our points come from $\mathbb{H}^k$ if we try embedding into $\mathbb{H}^k$ with hMDS we should theoretically have zero error. However, these are high dimensional representations. We want low dimensional hyperbolic representations. Thus, we compared against hMDS-2 which did not perform well. 

\begin{figure}[!htb]
\centering\hfill
\subfigure[TreeRep and NJ Tree]{\includegraphics[width=0.13\linewidth]{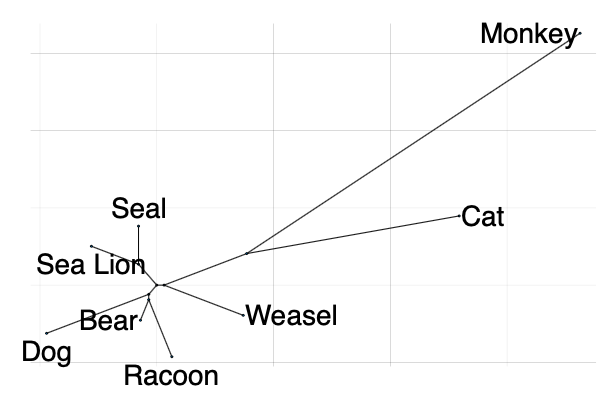}}\hfill
\subfigure[LS Tree]{\includegraphics[width=0.13\linewidth]{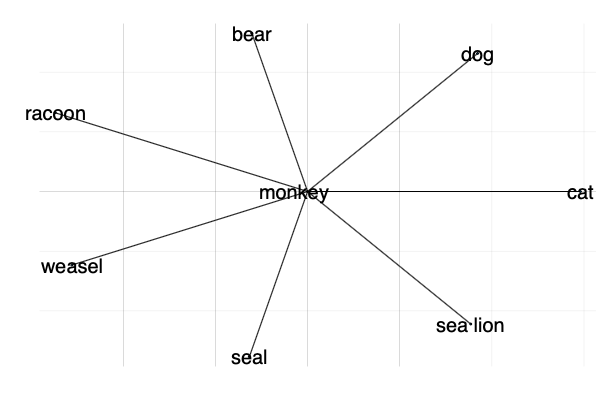}}\hfill
\subfigure[CT Tree]{\includegraphics[width=0.13\linewidth]{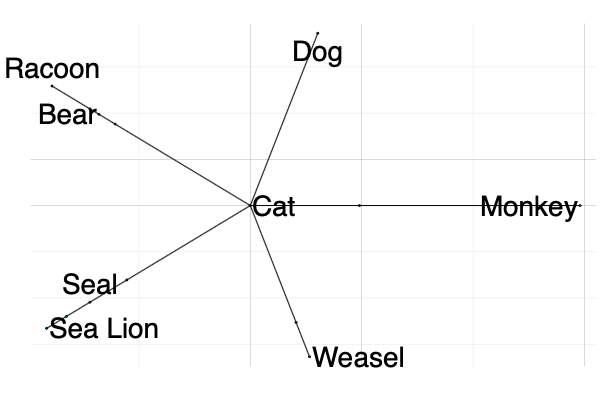}}\hfill
\subfigure[MST Tree]{\includegraphics[width=0.13\linewidth]{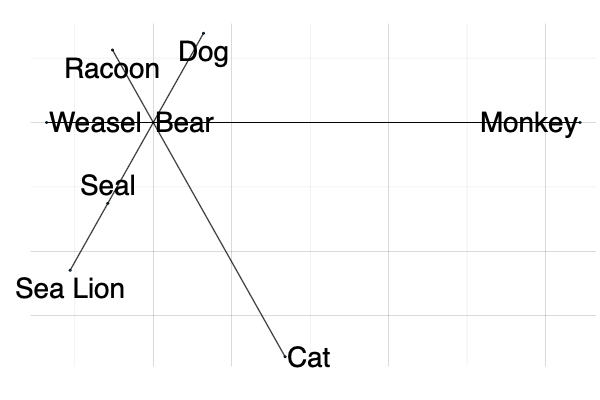}}\hfill
\subfigure[PM Embedding]{\includegraphics[width=0.13\linewidth]{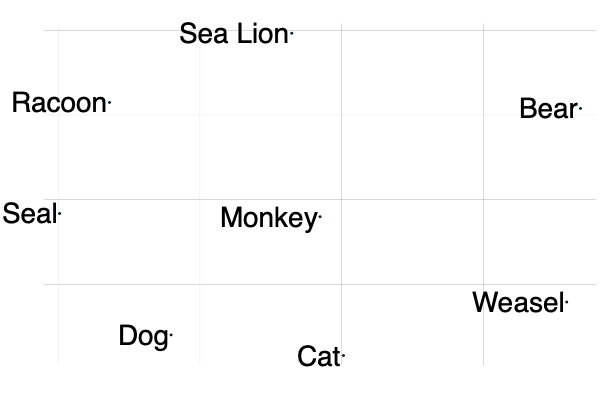}}\hfill
\subfigure[PT Embedding]{\includegraphics[width=0.13\linewidth]{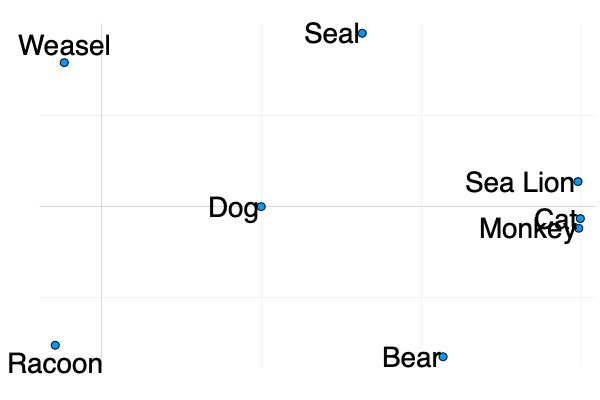}}\hfill
\caption{Tree structure and embeddings for the Immunological distances from \cite{sarich}.}
\label{fig:sarichTree}
\end{figure}

\begin{table}[!htb]
\setlength{\tabcolsep}{3.5pt} %% default is 6pt
\caption{Time taken in seconds and the average distortion of the tree metric learned by \textsc{TreeRep}, NJ, MST, and CT and of the 2-dimensional hyperbolic representation learned by PM and PT on the Zeisel and CBMC data set. The numbers for \textsc{TreeRep} (TR) are the average numbers over 20 trials.}
\centering
\begin{tabular}{c|ccccccc||cccc}
\toprule & \multicolumn{7}{c}{Zeisel} & \multicolumn{4}{c}{CBMC} \\
 & TR   & NJ   &  MST & LS & CT  & PT & PM & TR & NJ & MST & LS\\ \midrule
Time &  0.36 & 122.2 &\cellbest 0.11 & 7.2 & >14400 & 8507 & 12342 & 2.8 & >14400 & \cellbest 0.55 & 30\\ 
Distortion & \cellbest0.117 & 0.144& 0.365 & 0.250 & n/a & 0.531 & 0.294 & \cellbest 0.260 & n/a & 1.09 & 1.45\\ 
\bottomrule
\end{tabular}
\label{table:zeisel}
\end{table}

\textbf{Biological Data: scRNA seq and phylogenetic data.}
\label{sec:singlecell}
We also test on three real world biological data sets. The first data set consists of immunological distances from \citet{sarich}. Given these distances, the goal is to recover the hierarchical phylogenetic structure. 
As seen in Figure \ref{fig:sarichTree}, the trees returned by \textsc{TreeRep} and NJ recover this structure well, with sea lion and seal close to each other, and monkey and cat far away from everything else. Divergently, the trees and embeddings produced by MST, LS, \textsc{ConstructTree}, PM, and PT make less sense as phylogenetic trees. 

The second type of data sets are the Zeisel and CBMC sc RNA-seq data set \cite{Zeisel1138, CBMC}. These data sets are expected to be a tree as demonstrated in \citet{bianca}. Here we used the various algorithms to learn a tree structure on the data or to  learn an embedding into $\mathbb{H}^2$. The time taken and the average distortion are reported in Table \ref{table:zeisel}. In this case, we see that \textsc{TreeRep} has the \emph{lowest} distortion. Additionally, \textsc{TreeRep} is 20 times faster than NJ and is 20,000 to 40,000 times faster than PT and PM. Furthermore, NJ, CT, PT, and PM timed out (took greater than 4 hours) on the CBMC data set. For the CBMC data set, we see that \textsc{TreeRep} is \emph{only} algorithm that produces good quality embeddings in a reasonable time frame. Again we see that if the input is a metric instead of a graph, algorithms such as MST and LS do not do well. We also tried to use hMDS for this experiment, but it either didn't output a metric or it outputted the all zero metric. 

\begin{table*}[!h]
\setlength{\tabcolsep}{0.5pt} %% default is 6pt
\caption{Table with the time taken in seconds, MAP, and average distortion for all of the algorithms when given metrics that come from unweighted graph. Darker cell colors indicates better numbers for MAP and average distortion. The number next to PT, PM, LM is the dimension of the space used to learn the embedding. The numbers for \textsc{TreeRep} (TR) are the average numbers over 20 trials.} %The table also shows some graph statistics such as $n$, the number of nodes, $m$, the number of edges, and $\delta$, the hyperbolocity of the metric.}
\begin{tabular}{lccccccccccccc}
\toprule
Graph &  & TR & NJ & MST & LT & CT & LS  & PT & PT & PM & LM & LM & PM\\ 
& & & & & & & & 2 & 200 & 2 & 2 & 200 & 200 \\\toprule
& \multicolumn{1}{c}{$n$} & \multicolumn{12}{c}{MAP}  \\  \cmidrule(r){3-14}
Celegan & 452 & \cellhim{49} 0.473 & \cellhim{83}  0.713 & \cellhim{31} 0.337 & \cellhim{23} 0.272 & \cellhim{46}  0.447 & \cellhim{28} 0.313 & \cellhim{0} 0.098 & \cellhim{100} 0.857 & \cellhim{50} 0.479 & \cellhim{48}  0.466 &\cellhim{72}  0.646 &\cellhim{74} 0.662   \\ 
Dieseasome & 516 & \cellhim{88} 0.895 &\cellhim{100} 0.962 & \cellhim{70} 0.789 & \cellhim{58} 0.725 & \cellhim{74} 0.815 & \cellhim{69} 0.785 & \cellhim{0} 0.392 &\cellhim{83} 0.868 &\cellhim{71} 0.799 & \cellhim{68} 0.781 & \cellhim{84} 0.874 &  \cellhim{86.6} 0.886   \\ 
CS Phd & 1025 & \cellhim{98}  0.979 & \cellhim{100}  0.993 & \cellhim{100}  0.991 & \cellhim{96}  0.964 &  \cellhim{76}  0.807 & \cellhim{100} 0.991  &\cellhim{0}  0.190 & \cellhim{46} 0.556 & \cellhim{43} 0.537 &\cellhim{43}  0.537 & \cellhim{50} 0.593 & \cellhim{50} 0.593  \\ 
Yeast & 1458 & \cellhim{88}   0.815 & \cellhim{100}  0.892 & \cellhim{97}  0.871 & \cellhim{77}  0.742 & \cellhim{95}  0.859 & \cellhim{97}  0.873 & \cellhim{0}  0.235 & \cellhim{64} 0.658 & \cellhim{48}  0.522 & \cellhim{42} 0.513 & \cellhim{62} 0.641 & \cellhim{62} 0.643 \\ 
Grid-worm  & 3337 & \cellhim{81} 0.707 & \cellhim{100}  0.800 & \cellhim{94}  0.768 & \cellhim{71} 0.657  & - & \cellhim{94} 0.766 & -  & -& \cellhim{6} 0.334 &\cellhim{0}  0.306 & \cellhim{51} 0.558 & \cellhim{50}  0.553 \\
GRQC & 4158 &  \cellhim{54}  0.685  &\cellhim{100}  0.862 & \cellhim{54}  0.686  & \cellhim{0} 0.480  & - & \cellhim{54}  0.684  &- & - & \cellhim{29} 0.589 &\cellhim{32}  0.603 & \cellhim{79}  0.783 &\cellhim{80}  0.784\\
Enron & 33695 &\cellhim{100}  0.570 & - & \cellhim{80}  0.524 & - & - &  \cellhim{80} 0.523  &- & - & - & - & - & - \\
Wordnet & 74374 & \cellhim{99}  0.984  & - & \cellhim{100}  0.989  & - & -&  \cellhim{100}  0.989 &- & - & - & - & - & - \\
\bottomrule

  & $m$ & \multicolumn{12}{c}{Average Distortion}   \\  \cmidrule(r){3-14}
Celegan & 2024 & \cellhim{61} 0.197  & \cellhim{89}  0.124 & \cellhim{38} 0.255 & \cellhim{73} 0.166 &  \cellhim{11} 0.325 & \cellhim{0} 0.353  &\cellhim{46} 0.236 & \cellhim{100} 0.096 & \cellhim{46} 0.236 & \cellhim{40} 0.249 &\cellhim{50} 0.224 &\cellhim{55} 0.211   \\ 
Dieseasome & 1188 & \cellhim{57} 0.188  & \cellhim{61}  0.161 & \cellhim{61} 0.161 & \cellhim{62}  0.157 & \cellhim{7} 0.315 & \cellhim{20} 0.228 & \cellhim{38} 0.227 & \cellhim{100} 0.05 & \cellhim{4} 0.323 & \cellhim{2} 0.328 & \cellhim{0} 0.335 & \cellhim{1} 0.332  \\ 
CS Phd & 1043 & \cellhim{64} 0.204 & \cellhim{89}  0.134 &\cellhim{30}  0.298 & \cellhim{80}  0.161 & \cellhim{36} 0.282 &  \cellhim{37} 0.291  & \cellhim{31} 0.295 & \cellhim{100} 0.105 & \cellhim{2} 0.374 & \cellhim{1} 0.378 & \cellhim{1} 0.378&\cellhim{0} 0.380 \\ 
Yeast & 1948 & \cellhim{40} 0.205 & \cellhim{69}  0.149 & \cellhim{20}  0.243 & \cellhim{20}  0.243 & \cellhim{0} 0.282  & \cellhim{16} 0.243 & \cellhim{27} 0.230 & \cellhim{100} 0.089 & \cellhim{19} 0.246 & \cellhim{18} 0.248 & \cellhim{25} 0.234 & \cellhim{25} 0.234   \\ 
Grid-worm & 6421 & \cellhim{46} 0.188 & \cellhim{100}  0.135 & \cellhim{64} 0.171 & \cellhim{32}  0.202 & - & \cellhim{0} 0.234  & - & - & \cellhim{38} 0.196 & \cellhim{31} 0.203 & \cellhim{42} 0.192 & \cellhim{41} 0.193 \\ 
GRQC & 13422 & \cellhim{100} 0.192 & \cellhim{91}  0.200 & \cellhim{0} 0.275 & \cellhim{10}  0.267 & - & \cellhim{83} 0.206  &- & - & \cellhim{77} 0.212 &\cellhim{94}  0.198 &\cellhim{100}  0.193 & \cellhim{100} 0.193  \\
Enron & 180810 & \cellhim{100} 0.453 & - & \cellhim{20} 0.607 & - & - & \cellhim{49} 0.562  &- & - & - & - & - & - \\
Wordnet & 75834 &\cellhim{77} 0.131 & - & \cellhim{0} 0.336 & - & - &\cellhim{100} 0.071  &- & - & - & - & - & -  \\
\bottomrule

\multicolumn{1}{c}{} & $\delta$ &  \multicolumn{12}{c}{Time in seconds}  \\ \cmidrule(r){3-14}
Celegan & 0.21 &  0.014 & 0.28 &  0.0002  & 0.086 & 0.9 & 0.001&573 & 1156 & 712 & 523 & 1578 &1927 \\ 
Dieseasome & 0.17 &  0.017 & 0.41 &  0.0003 &0.39 & 15.76 & 0.001  & 678  & 1479  & 414 & 365  & 978 &1112 \\ 
CS Phd & 0.23 &  0.037 & 2.94 & 0.0007 & 1.97 & 226 & 0.006  & 1607 &  4145  & 467 & 324 & 768  & 1149  \\ 
Yeast & $\le 0.32$ &  0.057 & 8.04 & 0.0008 &8.21 & 957 & 0.001 & 9526 & 17876 & 972& 619& 1334 &  2269\\ 
Grid-worm & $\le 0.38$ & 0.731 &  163 &  0.001 & 191  & - & 0.007 &- & - & 2645 & 1973 & 4674 &5593\\
GRQC & $\le 0.36$ &  0.42 & 311 &  0.0014  & 70.9 & - & 0.006 &- & - & 7524 & 7217 & 9767 &1187\\
Enron & - &  27 & - & 0.013 & - & - & 0.13 & - &-&-&-&-&- \\
Wordnet & - &  74 & - & 0.18 & - & - & 0.08& - &-&-&-&-&-\\
\bottomrule
\end{tabular}
\label{table:graph}
\end{table*}

\textbf{Unweighted Graphs.}
\label{sec:graphs}
Finally, we consider metrics that come from unweighted graphs. We use eight well known graph data sets from \cite{nr}. Table \ref{table:graph} records the performance of all the algorithms for each of these data sets. For learning tree metrics to approximate general metrics, we see that NJ has the best MAP, with \textsc{TreeRep}, MST, and LS tied for second place. In terms of distortion, NJ is the best, \textsc{TreeRep} is second, while MST is third and LS is sixth. However, \textsc{NJ} is extremely slow and is not viable at scale. Hence, in this case, we have three algorithms with good performance at large scale; \textsc{TreeRep}, MST, and LS. However, MST and LS did not perform well in the previous experiments.

For the task of learning hyperbolic representations, we see that PM, LM, and PT are much slower than the methods that learn a tree first. In fact, these algorithms were too slow to compute the hyperbolic embeddings for the larger data sets. Additionally, this extra computational effort does not always result in improved quality. In all cases, except for the Celegan data set, the MAP returned by \textsc{TreeRep} is superior to the MAP of the $2$-dimensional embeddings produced by PM, LM, and PT. In fact, in most cases, these 2-dimensional embeddings, have worse MAP than all of the tree first methods. Even when they learn 200-dimensional embeddings, PM, LM and PT have worse MAP than \textsc{TreeRep} on most of the data sets. Furthermore, except for PT200, the average distortion of the metric returned by \textsc{TreeRep} is superior to PT2, PM, an LM.
%\footnote{All numbers for the \textsc{TreeRep} algorithm that we have reported are averages, but due to the speed of the algorithm, we can actually run the algorithm multiple times and pick the tree with the best metric. The best numbers are reported in Table \ref{table:best} in Appendix \ref{sec:trbest}. Additionally, given the tree structure, the distortion can be further improved, without sacrificing MAP. The method for doing this is also discussed in Appendix \ref{sec:improvedist}. } 
Thus, showing the effectiveness of \textsc{TreeRep} at learning good Hyperbolic representations quickly.

\section{Broader Impact}

There are multiple aspects to the broader impacts of our work, from the impact upon computational biology, specifically, to the impact upon data sciences more generally. The potential impacts on society, both positive and negative, are large. Computational biology is undergoing a revolution due to simultaneous advances in the creation of novel technologies for the collection of multiple and novel sources of data, and in the progress of the development of machine learning algorithms for the analysis of such data. Social science has a similar revolution in its use of computational techniques for the analysis and gathering of data.  

Cellular differentiation is the process by which cells transition from one cell type (typically an immature cell) into more specialized types. Understanding how cells differentiate is a critical problem in modern developmental and cancer biology. Single-cell measurement technologies, such as single-cell RNA-sequencing (scRNA-seq) and mass cytometry, have enabled the study of these processes. To visualize, cluster, and infer temporal properties of the developmental trajectory, many researchers have developed algorithms that leverage hierarchical representations of single cell data. To discover these geometric relationships, many state-of-the-art methods rely on distances in low-dimensional Euclidean embeddings of cell measurements. This approach is limited, however, because these types of embeddings lead to substantial distortions in the visualization, clustering, and the identification of cell type lineages. Our work is specifically focused on extracting and representing hierarchical information.

On the more negative side, these algorithms might also be used to analyze social hierarchies and to divine social structure from data about peoples' interactions. Such tools might encourage, even justify, the intrusive and pervasive collection of data about how people interact and with whom.

\nocite{*}
\bibliography{citations}
\bibliographystyle{plainnat}

\newpage
%!TEX root = ./Full.tex
\section{Metric First Discussion and Justification}
\label{sec:understand}

Table \ref{table:graph} shows that for most of the data sets, learning a tree structure first and then embedding it into hyperbolic space, yields embeddings with better MAP and average distortion compared to methods that learn the embedding directly. One possible explanation for this phenomenon is that the optimization problems that seek the embeddings directly are not being solved optimally. That is, the algorithms get stuck at some local minimum. Another possibility is that there is a disconnect between the objective being optimized and the statistics calculated to judge the quality of the embeddings. 

We propose that there are geometric facts about hyperbolic space that suggest embedding by first learning a tree is the correct approach. The tree-likeness of hyperbolic space has been studied from many different approaches. We present details from \citet{hamann_2018, rtree} and looks at the geometry of $\mathbb{H}^k$ at its two extremes; large scale and small scale. Since $\mathbb{H}^k$ is a manifold, we know that at small scales hyperbolic space looks like Euclidean space. Additionally, in the Poincare disk, the hyperbolic Riemannian metric is given by $\frac{4}{(1-x^2-y^2)^2}(dx^2+dy^2)$ and is just a re-scaling of the Euclidean metric. Thus, at small scales, hyperbolic space is similar to Euclidean space.

Hence to take advantage of hyperbolic representations (i.e., why learn a hyperbolic representation instead of a Euclidean one), we want to embed data into $\mathbb{H}^k$ at scale. To study the large scale geometry of $\mathbb{H}^k$, we consider the asymptotic cone for hyperbolic space $Con(\mathbb{H}^k)$. In particular, we can think of the asymptotic cone as the ``view of our space from infinitely far away''. See the more detailed discussion in Appendix \ref{sec:acones} for examples and complete definitions. The following connects $Con(\mathbb{H}^k)$ to $\mathbb{R}$-tree spaces.

\begin{thm} \cite{Young2008NOTESOA} $Con(\mathbb{H}^k)$ is a complete $\mathbb{R}$-tree. \end{thm}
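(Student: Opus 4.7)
The plan is to reduce the theorem to three verifications about $Con(\mathbb{H}^k)$: that it is (i) complete, (ii) geodesic, and (iii) $0$-hyperbolic. Once these are in hand, the equivalence from \citet{tree0} cited earlier in the paper (a geodesic $0$-hyperbolic metric space is an $\mathbb{R}$-tree) finishes the argument. So the strategy is: set up the asymptotic cone construction, verify the three properties, then invoke \citet{tree0}.

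First I would fix the construction. Pick a non-principal ultrafilter $\omega$ on $\mathbb{N}$, a base-point sequence $(p_n) \subset \mathbb{H}^k$, and scales $\lambda_n \to \infty$, and define $Con(\mathbb{H}^k)$ as the ultralimit of the pointed rescaled spaces $(\mathbb{H}^k, d/\lambda_n, p_n)$. Points are equivalence classes of sequences $(x_n)$ with $d(x_n,p_n)/\lambda_n$ bounded, modulo the relation $[x_n] = [y_n]$ iff $\lim_\omega d(x_n,y_n)/\lambda_n = 0$. Completeness (i) is a standard property of ultralimits that I would cite rather than reproduce. For (ii), since $\mathbb{H}^k$ is geodesic, any two points $[x_n], [y_n] \in Con(\mathbb{H}^k)$ admit a midpoint, namely $[m_n]$ where $m_n$ is the midpoint of the $\mathbb{H}^k$-geodesic from $x_n$ to $y_n$. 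Iterated midpoints combined with completeness produce an honest geodesic segment, so $Con(\mathbb{H}^k)$ is a geodesic metric space.

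The core step is (iii). The excerpt records that $\mathbb{H}^k$ is $\delta$-hyperbolic with $\delta = \tanh^{-1}(1/\sqrt{2})$. Since the Gromov product is linear in the metric, rescaling by $1/\lambda_n$ rescales the hyperbolicity constant, so $(\mathbb{H}^k, d/\lambda_n)$ is $(\delta/\lambda_n)$-hyperbolic. For any four points $w,x,y,z \in Con(\mathbb{H}^k)$ represented by $(w_n),(x_n),(y_n),(z_n)$, the Gromov product in the cone equals $\lim_\omega (x_n,y_n)_{w_n}/\lambda_n$. Applying the four-point inequality
$$ (x_n,y_n)_{w_n}/\lambda_n \;\ge\; \min\bigl((x_n,z_n)_{w_n}/\lambda_n,\, (y_n,z_n)_{w_n}/\lambda_n\bigr) - \delta/\lambda_n $$
index-wise and taking the ultralimit annihilates the $\delta/\lambda_n$ error term, yielding $(x,y)_w \ge \min((x,z)_w,(y,z)_w)$, i.e.\ $0$-hyperbolicity.

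The main obstacle is not deep mathematics but bookkeeping: confirming that the ultralimit pseudometric descends to an honest complete metric, that midpoint sequences really do produce geodesics in the cone, and that the hyperbolicity inequality passes cleanly to the limit along $\omega$. I would treat these as black-box facts from the standard theory of ultralimits and asymptotic cones (see e.g.\ \citet{Young2008NOTESOA}); the only genuinely \emph{hyperbolic} ingredient is the linear scaling of $\delta$, which is what forces the cone to collapse onto a tree.
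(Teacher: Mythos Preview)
The paper does not prove this theorem; it merely cites it from \citet{Young2008NOTESOA} and, in the appendix on asymptotic cones, sets up the definitions needed to state it without supplying an argument. So there is no ``paper's own proof'' to compare against.

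That said, your proposal is the standard and correct route to this result. The three-step reduction (complete, geodesic, $0$-hyperbolic) together with the characterization from \citet{tree0} is exactly how the theorem is typically established, and your identification of the key mechanism---that rescaling the metric by $1/\lambda_n$ rescales the hyperbolicity constant to $\delta/\lambda_n \to 0$---is precisely the hyperbolic content of the argument. The only remark I would add is that the implication ``geodesic $+$ $0$-hyperbolic $\Rightarrow$ $\mathbb{R}$-tree'' requires the space to be genuinely geodesic (not merely a length space), so your midpoint-plus-completeness step is doing real work and should not be skipped; you have it, but it is worth flagging as essential rather than routine.
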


Thus, we see that the large scale structure of hyperbolic space is a tree, indicating a strong connection between learning trees and learning hyperbolic embeddings. Furthermore, it can be shown that $Con(\mathbb{H}^k)$ is a $2^{\aleph_0}$-universal tree. That is, any tree with finitely many nodes can be embedded into $Con(\mathbb{H}^k)$ exactly. However, these are still embeddings into $Con(\mathbb{H}^k)$. We would like to study embeddings into $\mathbb{H}^k$.

\begin{defn} A metric space $(T , d_T )$ admits an isometric embedding at infinity into the space $(X,d_X)$ if there exists a sequence of positive scaling factors $\lambda_i \to \infty$ such that for every point $t \in T$, there exists an infinite sequence $\{x^i_t\}, i = 1, 2, \hdots $ of points in $X$ such that for all $t_1, t_2 \in T$
$\lim_{i \to \infty} d_X (x^i_{t_1} , x^i_{t_2})/\lambda_i = d_T (t_1, t_2)$ \end{defn}

\begin{thm} \cite{rtree} $Con(\mathbb{H}^k)$ can be isometrically embedded at infinity into $\mathbb{H}^k$. 
\end{thm}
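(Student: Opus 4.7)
The plan is to leverage the structural correspondence between $Con(\mathbb{H}^k)$ and the asymptotic geometry of $\mathbb{H}^k$. Recall that $Con(\mathbb{H}^k)$ is, by construction, an ultralimit of the rescaled pointed metric spaces $(\mathbb{H}^k, d/\lambda_i, o)$ as $\lambda_i \to \infty$, so each $t \in Con(\mathbb{H}^k)$ is represented by a sequence $(y_t^i)$ in $\mathbb{H}^k$ with $d(y_t^i, o)/\lambda_i$ bounded, and the ultralimit formula already gives $\lim_{\omega} d_{\mathbb{H}^k}(y_{t_1}^i, y_{t_2}^i)/\lambda_i = d_T(t_1, t_2)$. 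The definition of ``isometric embedding at infinity'' however requires an ordinary limit (not an ultralimit) for every pair of points, so the substance of the proof lies in upgrading ultralimit convergence to ordinary limit convergence. A naive diagonalization will not work because $Con(\mathbb{H}^k)$ need not be separable; instead, I would construct the sequences $\{x_t^i\}$ explicitly from the tree structure of $Con(\mathbb{H}^k)$.

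First I would treat the case of a single geodesic ray through $o$ in the cone: for a ray $[o,\xi)$ pick a unit direction $v \in T_o\mathbb{H}^k$ and set $x_t^i := \exp_o(d_T(o,t)\, \lambda_i\, v)$. Then $d_{\mathbb{H}^k}(x_{t_1}^i, x_{t_2}^i) = |d_T(o,t_1) - d_T(o,t_2)|\, \lambda_i$ exactly, so the rescaled distances give $d_T(t_1,t_2)$ on the nose, as an ordinary limit.

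Next I would handle branching at $o$. For two rays $[o,\xi)$ and $[o,\eta)$ meeting only at $o$ in the tree, choose directions $v_\xi, v_\eta \in T_o\mathbb{H}^k$ with a fixed angle $\theta \in (0,\pi]$ and place the respective sequences along those hyperbolic geodesics. The hyperbolic law of cosines gives
\[
\cosh d_{\mathbb{H}^k}(x_{t_1}^i, x_{t_2}^i) = \cosh(r_1\lambda_i)\cosh(r_2\lambda_i) - \sinh(r_1\lambda_i)\sinh(r_2\lambda_i)\cos\theta,
\]
where $r_j = d_T(o,t_j)$. Because $\cos\theta < 1$, the right-hand side grows like $\tfrac{1}{2}(1-\cos\theta)e^{(r_1+r_2)\lambda_i}$, hence $d_{\mathbb{H}^k}(x_{t_1}^i, x_{t_2}^i)/\lambda_i \to r_1+r_2 = d_T(t_1,t_2)$ as an ordinary limit. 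Distinct rays in the cone therefore behave, at infinity, exactly like distinct branches of a tree meeting at $o$.

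The hard part is that $Con(\mathbb{H}^k)$ branches not only at $o$ but at every point, and has continuum-many branches of every cardinality. To handle this, I would build the embedding by an inductive/transfinite construction along the tree: at each branching point $p$ of the cone, represented by some hyperbolic sequence $(y_p^i)$, I would pick new outgoing directions in $T_{y_p^i}\mathbb{H}^k$ with a prescribed angular separation and follow those hyperbolic geodesics to define the continuation of the embedding along each branch issuing from $p$. The same hyperbolic law-of-cosines estimate, now applied at the basepoint $y_p^i$, shows that the rescaled hyperbolic distance between points on distinct branches of $p$ converges to the sum of their tree distances to $p$, yielding the required ordinary limit $d_T(t_1,t_2)$ for arbitrary pairs. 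The main obstacle, and where I expect the delicate work to lie, is carrying out this inductive construction consistently across all branchings without introducing cross-distortions — essentially, ensuring that the angular separations chosen at different branching points remain bounded below uniformly so the exponential-growth estimate dominates additive error terms in the law of cosines. Once that uniformity is secured, the closed-form estimate upgrades the ultralimit equality to a genuine ordinary limit for every pair of points simultaneously, completing the proof.
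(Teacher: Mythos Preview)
The paper does not contain a proof of this theorem. It is stated with the citation \cite{rtree} and used as a black box in the discussion motivating the metric-first approach; no argument for it appears in the main text or in the appendices on asymptotic cones. There is therefore nothing in the paper to compare your proposal against.

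On the content of your sketch: the basic mechanism you describe --- send tree rays from the root to hyperbolic geodesic rays, then use the hyperbolic law of cosines to show that rescaled distances across distinct branches tend to the tree-sum --- is exactly the right calculation for finitely or countably branching trees. The obstacle you flag at the end, however, is more severe than you allow. You propose to resolve it by ``ensuring that the angular separations chosen at different branching points remain bounded below uniformly,'' but this is impossible here: $Con(\mathbb{H}^k)$ has valence $2^{\aleph_0}$ at \emph{every} point, while the unit tangent sphere at a point of $\mathbb{H}^k$ is a compact $(k{-}1)$-manifold, so no uncountable family of directions can have a uniform positive lower bound on pairwise angles. Your own law-of-cosines computation in fact shows what is really needed: $d/\lambda_i \to r_1+r_2$ holds as soon as $\log(1-\cos\theta_i)/\lambda_i \to 0$, so the angles may depend on $i$ and shrink to zero provided they do so subexponentially in $\lambda_i$. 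Organising such scale-dependent choices coherently across a non-separable tree (so that all pairwise ordinary limits come out right simultaneously) is the genuine content of the result, and your transfinite induction with fixed, uniformly separated angles does not reach it.
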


Thus, we can embed any tree into $\mathbb{H}^k$ with arbitrarily low distortion. A type of converse is also true.

\begin{defn} A (geodesic) ray $R$ is a (isometric) homeomorphic image of $[0,\infty)$, such that for any ball $B$ of finite diameter, $R$ lies outside $B$ eventually. \end{defn}

\citet{hamann_2018} showed that we can construct a rooted $\mathbb{R}$-tree $T$ inside $\mathbb{H}^k$, such that every geodesic ray in $\mathbb{H}^k$ eventually converges to a ray of $T$. Thus, showing that any configuration of points at scale in $\mathbb{H}^k$ can be approximated by a tree. Additionally, larger the scale points can be better approximated by trees. More details can be found in Appendix \ref{sec:gtrees}. Thus, showing that learning a tree and then embedding this tree into $\mathbb{H}^k$ is equivalent to learning hyperbolic representations at scale. 

This provides an explanation for why as the scale and dimension increased, \textsc{TreeRep} found a tree that better approximated the hyperbolic metric in Section \ref{sec:rand}. This also provides a justification for why learning a tree first, results in better hyperbolic representations.

\section{Proofs}
\label{sec:proofs}

\subsection{Tree Representation Proofs}
\label{sec:treerepproofs}

{\reflemma{lem:universal} 
Given a metric $d$ on three points $x,y,z$, there exists a (weighted) tree $(T,d_T)$ on four nodes $x,y,z,r$, such that $r$ is adjacent to $x,y,z$, the edge weights are given by $w(x,r) = (y,z)_x$, $w(y,r) = (x,z)_y$ and $w(z,r) = (x,y)_z$, and the metric $d_T$ on the tree agrees with $d$. }
\begin{proof} The basic structure of this tree can be seen in Figure \ref{fig:basicTree}. To prove that the metrics agree we such need to see the following calculation. 
\begin{align*}
d_T(x,y) &= w(x,r) + w(r,y) \\
&= (y,z)_x + (x,z)_y \\
&= \frac{1}{2} (d(x,y) + d(x,z) - d(y,z) \\ &\ \ \ \ \ \ \ + d(x,y) + d(y,z) - d(x,z) )\\
&= d(x,y) \end{align*}
Here $d_T$ is the metric on the tree $T$.
\end{proof}

One important fact that we need is that if $(X,d)$ is a metric graph, then for any three distinct points $x,y,z \in X$, the geodesics connecting them intersect at a unique point. As seen in Lemma \ref{lem:universal}, we refer to this point a Steiner point $r$. It is now important to note that even though $r$ may not be a point in the data set we are given, but $r \in X$ \cite{bridson2013metric}. Thus, in the following lemmas, whenever we find a Steiner point, we will assume that the metric $d$ is defined on $r$. 

\begin{lem} \label{lem:contract} If $d$ is a tree metric and $x,y,w$ are three points then 
\begin{enumerate}[nosep]
\item $(x,y)_w = 0$ if and only if $w \in g(x,y)$
\item $(x,y)_w = d(x,w)$ if and only if $(w,y)_x = 0$. 
\item $(x,y)_w = d(y,w)$ if and only if $(w,x)_y = 0$. 
\end{enumerate} \end{lem}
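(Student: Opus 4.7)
The plan is to reduce everything to the basic fact that in a tree space a point lies on a geodesic if and only if the triangle inequality degenerates to an equality. The Gromov product $(x,y)_w=\tfrac12(d(w,x)+d(w,y)-d(x,y))$ is precisely half of the ``defect'' in the triangle inequality, so the three statements are really just algebraic rewritings of the same geometric fact.

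First I would prove part 1. The equation $(x,y)_w=0$ is equivalent to $d(w,x)+d(w,y)=d(x,y)$. Since $(X,d)$ is a tree space (an $\mathbb{R}$-tree, by Definition \ref{def:rtree} and the equivalence of tree-space definitions recalled in the paper), the geodesic $g(x,y)$ is the unique geodesic joining $x$ and $y$, and concatenating the geodesics $g(x,w)$ and $g(w,y)$ gives a geodesic from $x$ to $y$ of length $d(x,w)+d(w,y)$. If this length equals $d(x,y)$, uniqueness forces the concatenation to be $g(x,y)$, so $w\in g(x,y)$. Conversely, if $w\in g(x,y)$, then $g(x,y)=g(x,w)\cup g(w,y)$ by the geodesic-concatenation clause of Definition \ref{def:rtree}, and summing lengths yields $d(x,w)+d(w,y)=d(x,y)$.

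For part 2 I would simply rewrite the hypothesis. The equation $(x,y)_w=d(x,w)$ expands to
\[
\tfrac12\bigl(d(w,x)+d(w,y)-d(x,y)\bigr)=d(x,w),
\]
which rearranges to $d(w,y)=d(w,x)+d(x,y)$. Reading this as the triangle-inequality equality with $x$ playing the role of ``the middle point,'' it is exactly the condition $d(x,w)+d(x,y)=d(w,y)$, i.e.\ $2(w,y)_x=0$. Applying part 1 with the roles of $w$ and $x$ swapped then gives $x\in g(w,y)$ as a bonus, but more to the point it shows directly that $(w,y)_x=0$. Both directions are algebraic equivalences, so this is a biconditional. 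Part 3 is identical after swapping the roles of $x$ and $y$.

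The only subtle point is knowing that a tree space has unique geodesics and that concatenation of compatible geodesics stays geodesic; both are built into Definition \ref{def:rtree}, and the $0$-hyperbolic $\Leftrightarrow$ $\mathbb{R}$-tree correspondence cited in the preliminaries lets me invoke them freely. So I do not expect any real obstacle: part 1 is the content, and parts 2 and 3 are one line of algebra each plus an appeal to part 1.
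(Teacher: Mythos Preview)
Your proposal is correct and follows essentially the same approach as the paper: expand the Gromov product and rearrange to obtain the triangle-inequality equality, then read off the conclusion. If anything, your argument is more complete than the paper's, since you explicitly handle both directions of part~1 and justify via the $\mathbb{R}$-tree definition why $d(w,x)+d(w,y)=d(x,y)$ forces $w\in g(x,y)$, whereas the paper only writes out the forward implications.
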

Here $g(x,y)$ is the unique path connecting $x$ and $y$.
\begin{proof}

For 1. we see that \begin{align*} 0 = (x,y)_w = \frac{1}{2}(d(w,x) + d(w,y) - d(x,y)) \\
\Rightarrow d(x,y) = d(w,x) + d(w,y) \end{align*}
Thus we have that $w \in g(x,y)$. 

For 2. we see that \begin{align*} (x,y)_w = d(x,w) &\Rightarrow d(w,x) + d(w,y) - d(x,y)\\
& \ \ \ \ \ = 2d(w,x) \\
&\Rightarrow d(w,x) +d(x,y) - d(w,y) = 0 \\
&\Rightarrow  2(w,y)_x = 0 \end{align*}

The proof for 3 is similar to that of 2. 

\end{proof}

{\reflemma{lem:structure} 
Let $(X,d)$ be a tree space. Let $w,x,y,z$ be four points in $X$ and let $(T,d_T)$ be the universal tree on $x,y,z$ with node $r$ as the Steiner node. Then we can extend $(T,d_T)$ to $(\hat{T},d_{\hat T})$ to include $w$ such that $d_{\hat T} = d$.
}
\begin{proof}
We note that there are four different possible cases for the configuration of $x,y,z,w$ depending on the relationship amongst the Gromov products. Each case determines a different placement of $r$, as follows:
\begin{enumerate}[nosep]
\item \label{part:r} If $(x,y)_w = (x,z)_w = (y,z)_w = 0$, then replace $r$ with $w$ to obtain $\hat{T}$. 
\item \label{part:zone1r} If $(x,y)_w = (x,z)_w = (y,z)_w = c > 0 $, then connect $w$ to $r$ via an edge of weight $c$ to obtain $\hat{T}$. 
\item \label{part:zone1} If there exists a permutation $\pi:\{x,y,z\} \to \{x,y,z\}$ such that, 
\[ 
    (\pi x,\pi y)_w = (\pi x,\pi z)_w = c < (\pi y,\pi z)_w 
\]  
and $d(\pi x, w) = (\pi x,\pi y)_w$, then connect $w$ to $\pi x$ via an edge of weight $c$ to obtain $\hat{T}$.
\item \label{part:zone2} If there exists a permutation $\pi:\{x,y,z\} \to \{x,y,z\}$ such that, 
\[ 
    (\pi x,\pi y)_w = (\pi x,\pi z)_w = c < (\pi y,\pi z)_w
\]
and $d(\pi x,w) > (\pi x,\pi y)_w$, then add a Steiner point $\hat{r}$ on the edge $x,r$ with $d(\pi x,\hat{r}) = d(\pi x,w) - c$ and connect $w$ to $\hat{r}$ via an edge of weight $c$ to obtain $\hat{T}$. 
\end{enumerate} 

To prove that these extensions of $T$ are consistent, first let us prove that there are exactly four cases. To do that, first note that since we have a  $0$-hyperbolic metric, at least two of the three Gromov products must be equal. Using the triangle inequality, we can see that for any three points $a,b,c$ the following holds 
\[
0 \le (a,b)_c \le d(a,c).
\] 
That is, either we are in the first two cases and three of products are equal, or we have that two of the products are equal. In the case that two of the products are equal, the permutation $\pi$ tells us which of the two are equal and we further subdivide into the case whether $d(\pi x, w) = (\pi x,\pi y)_w$ or $d(\pi x, w) > (\pi x,\pi y)_w$ as we cannot have $d(\pi x, w) < (\pi x,\pi y)_w$. 

Therefore, there are at most four possible configuration cases and it remains to show that the new tree $d_{\hat{T}}$ is consistent with $d$ on the four points. In each case, we present the high level intuition for why these modification result in a consistent tree. The low level details about the metric numbers can easily be checked. 

\textbf{Case 1:} If $(x,y)_w = (x,z)_w = (y,z)_w = 0$, then we replaced $r$ with $w$ in $\hat{T}$. In this case, using Lemma \ref{lem:contract}, we see that $w$ must lie on all tree geodesics $g(x,y), g(x,z), g(y,z)$. Since the metric comes from a tree, these three geodesics can only intersect at one point $r$. Thus, we must replace $r$ with $w$. 

To see that the metric is consistent, we need to verify that $d(w,x)$ = $d_{\hat{T}}(r,x)$. To see we have the following:
\begin{align*}
    d_{\hat{T}}(r,x) &= (y,z)_x \\
                     &= (y,z)_x + (x,y)_w + (x,z)_w - (y,z)_w \\
                     &= d(w,x)
\end{align*}

\textbf{Case 2:} If 
\[
    (x,y)_w = (x,z)_w = (y,z)_w = c > 0, 
\]
then we can see that $(x,w)_r = (y,w)_r = (z,w)_r = 0$. In this case, $r$ lies on geodesics $g(x,y)$, $g(x,z)$, $g(x,w)$, $g(y,w)$, $g(y,z)$, $g(z,w)$. Thus, we must have a star shaped graph with $r$ in the center. 

To see that the metric is consistent we just need to verify that $d(w,x) = d_{\hat{T}}(w,x)$. To see that we have the following calculation. 
\begin{align*}
               d_{\hat{T}}(w,x)         &= d_{\hat{T}}(w,r) + d_{\hat{T}}(x,r) \\
                                        &= (x,y)_w+(y,z)_x \\
                                        &= (x,y)_w + (x,z)_w - (y,z)_w + (y,z)_x \\
                                        &= d(w,x) 
\end{align*}

\textbf{Case 3:} In this case suppose condition \ref{part:zone2} is true. Without loss of generality assume that $\pi$ is the identity map. In each case, we have a tree that looks like a tree in Figure \ref{fig:3trees}. In this case, we can do the calculations and see that $(w,y)_r = (w,z)_r = 0$. That is, the geodesics $g(w,y), g(w,z), g(y,z), g(x,y), g(x,z)$ all intersect at the same point. Thus, again telling us our tree structure.

To check that the metric is consistent, we need to verify that $d(w,y) = d_{\hat{T}}(w,y) = d_{\hat{T}}(w,r) + d_{\hat{T}}(r,y)$. Before we can do that, let us first verify that 

\[ d_{\hat{T}}(w,r) = (y,z)_w \]

To verify this we need to the following calculation 
\begin{align*}
    d_{\hat{T}}(w,r) &= d_{\hat{T}}(r, \hat{r}) + d_{\hat{T}}(\hat{r},w) \\
                     &= c + d_T(x,r) - d_{\hat{T}}(x,\hat{r}) \\
                     &= c + (y,z)_x - (d(x,w)-c) \\
                     &= 2c + (y,z)_x - d(x,w) \\
                     &= (x,y)_w + (x,z)_w + (y,z)_x - d(w,x) \\
                     &= (y,z)_w 
\end{align*}

We then can see that 
\begin{align*}
     d_{\hat{T}}(w,y) & = d_{\hat{T}}(w,r) + d_{\hat{T}}(r,y) \\
                                         &= (y,z)_w + (x,z)_y \\
                                         &= (y,z)_w + (x,y)_w - (x,z)_w + (x,z)_y \\
                                         &= d(w,y)
\end{align*}

Note $d_{\hat{T}}(w,r) = (y,z)_w$ and the consistency of the metric implies that $d(w,r) = (y,z)_w$. Finally, we can see $(w,y)_r = 0$ as follows.

\begin{align*}
    2(w,y)_r &= d(w,r) + d(r,y) - d(w,y) \\
    &= (z,y)_w + (x,z)_y - d(w,y) \\
    &= \frac{1}{2}(d(w,z) - d(w,y) + d(x,y) - d(x,z) \\
    &= (x,z)_w - (x,y)_w \\
    &= 0 
\end{align*}

Note that this also implies that $(w,x)_r > 0$.

\textbf{Case 4:}  In this case, suppose condition \ref{part:zone1} is true. Without loss of generality assume that $\pi$ is the identity map.  Then in this case, we still have that $(w,y)_r = (w,z)_r = 0$, but in addition we have that $(w,y)_x = (w,z)_x = 0$. Thus, again telling us our tree structure. 

In this case, to verify that the metric is consistent, we need to check that $d(w,y) = d_{\hat{T}}(w,y) = d_{\hat{T}}(w,x) + d_{\hat{T}}(x,y)$. To see this we have the following calculations. 

\begin{align*}
     d_{\hat{T}}(w,x) + d_{\hat{T}}(x,y) &= (x,y)_w + d(x,y) \\
                                         &= 2(x,y)_w - (x,z)_w + d(x,y) \\
                                         &= d(w,y) + (w,z)_x
\end{align*}

Thus, now it suffices to show that $(w,z)_x = 0$, which can be seen using the following calculations. 
\begin{align*}
    (x,z)_w = d(w,x) &\Rightarrow 0 = d(x,w)+d(x,z) - d(w,z) \\
                       &\Rightarrow (w,z)_x = 0 
\end{align*}

This also implies that $(w,z)_r = 0$. 
\end{proof}

The proof of Lemma~\ref{lem:structure} shows that there are a number of ways to extend $T$ to include the new point $w$. To clarify our discussion of the extension of $T$, we introduce new terminology.

\begin{defn} \label{defn:zones} Given a data set $V$ (consisting of data points, along with the distances amongst the points), a universal tree $T$ on $x,y,z \in V$ (with $r$ as the Steiner node), let us defining the following three zone types. The first type is associated only with the Steiner node $r$, while the other two types are defined for each of the original nodes $x,y,z$.
\begin{enumerate}[nosep]
    \item $Zone_1(r)$ is all $w \in V$  such that condition \ref{part:zone1r} is true in Lemma \ref{lem:structure}. 
    \item For a given permutation $\pi$, $Zone_1(\pi x)$ is all $w \in V$ such that condition \ref{part:zone1} is true in Lemma \ref{lem:structure} with $\pi$.
    \item For a given permutation $\pi$, $Zone_2(\pi x)$ is all $w \in V$, such that condition \ref{part:zone2} is true in Lemma \ref{lem:structure} with $\pi$.
\end{enumerate}
\end{defn}
Note that there are seven zones total.

\begin{lem} \label{lem:disconnect} Let $(X,d)$ is a metric tree. Let $x,y \in X$ and let $r \in g(x,y)$ if and only if $X \setminus \{r\}$ has at least two disconnected components and $x,y$ are in distinct components. 
\end{lem}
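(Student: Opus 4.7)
I would prove both directions of the biconditional using only the unique-geodesic structure of $\mathbb{R}$-trees from Definition \ref{def:rtree}. The statement implicitly assumes $r \notin \{x,y\}$, since otherwise $x$ or $y$ does not lie in $X\setminus\{r\}$ and the right-hand side cannot hold even though $r \in g(x,y)$ trivially; I would state this assumption up front.

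For the forward direction, I would first decompose $g(x,y) = g(x,r) \cup g(r,y)$, which is legitimate by the concatenation clause of Definition \ref{def:rtree} when $r \in g(x,y)$. To show that removing $r$ then places $x$ and $y$ in different components of $X\setminus\{r\}$, I would argue by contradiction: if some continuous path $\gamma:[0,1]\to X\setminus\{r\}$ joined $x$ to $y$, then by the standard fact that $\mathbb{R}$-trees are uniquely arcwise connected with $g(x,y)$ as the unique arc, the image of $\gamma$ would have to contain $g(x,y)$, hence contain $r$, contradicting $\gamma \subset X\setminus\{r\}$. For the converse, I would suppose $x$ and $y$ lie in different components of $X\setminus\{r\}$; since $g(x,y)$ is the isometric image of an interval it is connected, so if $r \notin g(x,y)$ then $g(x,y) \subset X\setminus\{r\}$ would be a connected subset joining $x$ and $y$ and forcing them into the same component, a contradiction. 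Thus $r \in g(x,y)$.

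The main obstacle is the appeal to unique arcwise connectedness in the forward direction; this is a classical topological fact about $\mathbb{R}$-trees but is not explicitly recorded in the excerpt. A self-contained alternative I would prefer is to exhibit the disconnection explicitly: define
\[
    U = \{ v \in X\setminus\{r\} : r \notin g(x,v)\}, \qquad V = (X\setminus\{r\}) \setminus U,
\]
so $x \in U$ by convention ($g(x,x) = \{x\}$) and $y \in V$ iff $r \in g(x,y)$. Using only Definition \ref{def:rtree} and Lemma \ref{lem:contract}, one can check that $U$ and $V$ are both open in the metric topology of $X\setminus\{r\}$: for $v \in U$, the open ball of radius $d(v,r) - d(v,g(x,r))$ stays inside $U$, and similarly for $V$ one exploits the geodesic concatenation at $r$. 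This immediately yields both directions of the lemma at once, while relying only on machinery already available in the excerpt.
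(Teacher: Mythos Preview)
Your primary argument is essentially the paper's own proof: for the forward direction the paper argues by contradiction that a path from $x$ to $y$ in $X\setminus\{r\}$ would give a second simple path in $X$, violating uniqueness, and for the converse it just observes that $X$ is path connected so the geodesic $g(x,y)$ must pass through $r$. The paper does not isolate the $r\notin\{x,y\}$ caveat, does not distinguish paths from arcs, and does not construct anything like your explicit $U/V$ partition; your extra care is sound but exceeds what the paper actually records.
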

\begin{proof}
Suppose $r \in g(x,y)$. In metric trees, we know that there exist unique simple path between any two points. Therefore, if, after removing $r$, a path connecting $x,y$ remained (i.e., they are in the same component), then there are two simple paths connecting $x,y$ in $X$, which is not possible. 

Suppose $x,y$ are in two separate components of $X\setminus\{r\}$, then because $X$ is path connected, the geodesic between $x$ and $y$ must pass through $r$. 
\end{proof}

{\reflemma{lem:consistency} Given $(X,d)$ a metric tree, and a universal tree $T$ on $x,y,z$, we have the following 
\begin{enumerate}[nosep]
\item If $w \in Zone_1(x)$, then for all $\hat{w} \not\in Zone_1(x)$, we have that $x \in g(w,\hat{w})$. 
\item If $w \in Zone_2(x)$, then for all $\hat{w} \not\in Zone_i(x)$ for $i=1,2$, then we have that $r \in g(w,\hat{w})$. 
\end{enumerate}}

\begin{proof}
First let us prove statement \ref{lem:zone1}. To do this, let us analyze the possible zones to which $\hat{w}$ belongs.

\textbf{Case 1:} Suppose $\hat{w} \in Zone_1(y)$ (similar for $\hat{w} \in Zone_1(z)$). Then we have that $d(\hat{w},y) = (x,y)_{\hat{w}}$. This, implies that $(\hat{w},x)_y = 0$. Thus, by Lemma \ref{lem:contract}, we have that $y \in g(\hat{w},x)$. Similarly we have that $x \in g(w,y)$. 

Now since $w \in Zone_1(x)$, we know that $g(x,w) \cap g(x,y) = \{x\}$. Similarly, know that $g(x,y) \cap g(y, \hat{w}) = \{y\}$. Then using Lemma \ref{lem:disconnect}, on removing $x$, we see that $w$ and $y$ are different connected components. Then since $x \not\in g(\hat{w},y)$, we see that $\hat{w},y$ is in one connected component. Thus, $w, \hat{w}$ are in different components. Thus, $x \in g(w, \hat{w})$ by Lemma \ref{lem:disconnect}.

\textbf{Case 2:} Suppose $\hat{w} \in Zone_2(y)$ (similar for $\hat{w} \in Zone_2(z)$). Now let $r$ be the Steiner node of the universal tree on $x,y,z$. In this case we know from Lemma \ref{lem:structure} that $r \in g(\hat{w},x)$ and that $g(w,x) \cap g(x,r) = \{x\}$. 

Now since $w \in Zone_1(x)$, we know that $g(x,w) \cap g(x,r) = \{x\}$. Similarly, know that $g(x,r) \cap g(r, \hat{w}) = \{r\}$. Then using Lemma \ref{lem:disconnect}, on removing $x$, we see that $w$ and $r$ are different connected components. Then since $x \not\in g(\hat{w},r)$, we see that $\hat{w},r$ is in one connected component. Thus, $w, \hat{w}$ are in different components. Thus, $x \in g(w, \hat{w})$ by Lemma \ref{lem:disconnect}.

\textbf{Case 3:} $\hat{w} \in Zone_2(x)$. Let $r$ be the Steiner node for the universal tree on $x,y,z$. Now my Lemma \ref{lem:structure}, we know that $x \in g(w,r)$. Thus, again by removing $x$ and using Lemma \ref{lem:disconnect}, $r$ and $w$ are in different. We also have that by Lemma \ref{lem:structure} $x \not\in g(\hat{w},r)$. Thus $r, \hat{w}$ are in the same connected component of $X \setminus\{x\}$. Thus, $w$ and $\hat{w}$ are in different connected components. Thus, by Lemma \ref{lem:disconnect}, $x \in g(w,\hat{w})$
 
$ $ 

Thus in all cases, we can see that $x \in g(w,\hat{w})$

$ $

Now let us prove statement \ref{lem:zone2}. Without loss of generality assume that 
\[ 
    \hat{w} \in Zone_i(y)
\]
for $i=1,2$. Then from Lemma \ref{lem:structure}, we know that $r \not\in g(w,x)$ and $r \not\in g(\hat{w},y)$, but $r \in g(x,y)$. Thus, using Lemma \ref{lem:disconnect} on removing $r$, $x$ and $y$ and in different components and $w$ is in the same component as $x$ and $\hat{w}$ is in the same component as $y$. Thus, again using Lemma \ref{lem:disconnect}, we have that $r \in g(w,\hat{w})$. 

\end{proof}

{\reftheorem{thm:metriconstruct} 
Given $(X,d)$, a $\delta$-hyperbolic metric space, and $n$ points $x_1, \hdots, x_n \in X$, \textsc{TreeRep} returns a tree $(T,d_T)$. In the case that $\delta = 0$, $d_T = d$, and $T$ has the fewest possible nodes. \textsc{TreeRep} has worst case run time $O(n^2)$. Furthermore the algorithm is embarrassingly parallelizable.}

\begin{proof} 
The proof of this theorem follows directly from our structural lemmas. More precisely, we show that for $\delta = 0$, \textsc{TreeRep} returns a consistent metric via induction on $n$, the number of data points. 

\textbf{Base Case:} The case when $n \le 3$ is covered by Lemma \ref{lem:universal}. And, the case when $n=4$ is covered by Lemma \ref{lem:structure}. 

\textbf{Inductive Hypothesis:} Assume that for all $k \le n$, our data set of $k$ points is consistent with a $0$-hyperbolic metric $d$, then \textsc{TreeRep} returns a tree $(T,d_T)$ that is consistent with $d$ on the $k$ points. 

\textbf{Inductive Step:} Assume that $w$ is the last vertex attached to $T$. By the inductive hypothesis, we know that without $w$, $(T,d_T)$ is consistent on with $d$ so we only need to show that it is consistent with the addition of $w$. 

Now let $x,y,z$ be the universal tree used to sort $w$ in the penultimate recursive step. Let $r$ be the Steiner node. Then by Lemma \ref{lem:structure}, we know that $d_T(w,x) = d(w,x)$,  $d_T(w,y) = d(w,y)$, and  $d_T(w,z) = d(w,z)$.

Now without loss of generality assume that $w$ was sorted in a zone for $x$. That is, $w \in Zone_i(x)$ for $i=1,2$. 

\textbf{Case 1:} If $w \in Zone_1(x)$. Then from Lemma \ref{lem:zone1}, we know that for all $\hat{w} \not\in Zone_1(x)$, we have that $x \in g(w,\hat{w})$. Thus, having $d_T(x,w) = d(x,w)$ and $d_T(x,\hat{w}) = d(x,\hat{w})$ is sufficient to show consistency. 

Now, since $w$ was placed last there is at most one other point $\tilde{w}$ in $Zone_1(x)$, and $d_T(w,\tilde{w}) = d(w,\tilde{w})$ due to Lemma \ref{lem:universal}. 

\textbf{Case 2:} If $w \in Zone_2(x)$. Then from Lemma \ref{lem:zone2}, we know that for all $\hat{w} \not\in Zone_i(x)$, for $i = 1,2$ we have that $r \in g(w,\hat{w})$. Thus, having $d_T(r,w) = d(r,w)$ and $d_T(r,\hat{w}) = d(r,\hat{w})$ is sufficient to show consistency. 

Suppose $\hat{w} in Zone_1(x)$. Then from Lemma \ref{lem:zone1}, we have that $x \in g(w,\hat{w})$. Thus, having $d_T(x,w) = d(x,w)$ and $d_T(x,\hat{w}) = d(x,\hat{w})$ is sufficient to show consistency.

Finally, since $w$ was the last node placed there are no other nodes in $Zone_2(x)$. 

$ $

Thus, we have the the tree returned by \textsc{TreeRep} is consistent with the input metric $d$.

Notice that whenever we add a Steiner node $r$ we fix the position of at least one data point node. We then look at $O(n)$ Gromov inner products. Thus, we have a worst case running time of $O(n^2)$. 

Additionally, the part where we place nodes into their respective zones can be done in parallel. Thus, if we have $K$ threads then the running time is $O\left(\frac{n^2}{K}\right)$ for the worst running times. 

The final part of the theorem is that we return the tree with the smallest possible nodes. Whenever we look at any triangle formed by three points $x,y,z$, we place a Steiner node $r$. Now, if none of the distances from $x,y,z$ to $r$ is 0, then this Steiner node must exist in all tree consistent with $d$. If one of these distances is 0, we contracted that edge and got rid of $r$. Thus, along with the local consistency argument above this shows that all Steiner nodes that we have placed are necessary (the local consistency argument implies that no two of the Steiner nodes placed could in fact be made into one node due to the nodes beings in different regions). Thus, we have the fewest possible nodes. 

\end{proof}

\subsection{Tree Approximation Proofs}
\label{sec:treeaproxproofs} 

{ \refprop{prop:dist} Given a $\delta$-hyperbolic metric $d$, the universal tree $T$ on $x,y,z$ and a fourth point $w$, when sorting $w$ into its zone $zone_i(\pi x)$, \textsc{TreeRep} introduces an additive distortion of $\delta$ between $w$ and $\pi y, \pi z$ }

\begin{proof}
Without loss of generality assume that $\pi$ is the identity. 
In this case, we know that $d_T(w,r) = (y,z)_w$, and that $d_T(y,r) = (x,z)_y$. Thus, we have the following:

\begin{align*} |d_T(w,y) - d(w,y)| &= |d_T(w,r) + d_T(r,y) - d(w,y)| \\
&= |(y,z)_w + (x,z)_y - d(w,y)| \\
&= \frac{1}{2}|d(w,z) + d(y,x) \\
& \ \ \ \ \ - d(w,y) - d(x,y)| \\
&= |(x,y)_w - (x,z)_w| \\
& \le \delta
\end{align*}
\end{proof}

\section{Geometry: Asymptotic Cones}
\label{sec:acones}

\begin{defn} An ultrafilter $\mathcal{F}$ on $X$ is a subset of $\mathcal{P}(X)$ such that 
\begin{enumerate}[nosep]
\item If $A \in \mathcal{F}$ and $A \subset B$ then $B \in \mathcal{F}$ 
\item $A, B \in \mathcal{F}$ then $A \cap B \in \mathcal{F}$
\item For any $A \subset X$, exactly $1$ of $A, X \setminus A$ is in $\mathcal{F}$
\item $\emptyset \not \in \mathcal{F}$.
\end{enumerate}
\end{defn}

One way to view $\mathcal{F}$ is as defining a probability measure on $X$. In particular, we will view the sets in $\mathcal{F}$ to be large and the sets not in $\mathcal{F}$ to be small. Hence, we can define a measure $\nu$ such that for all $A \in \mathcal{F}$ we have that $\nu(A) = 1$ and for all $A \not \in \mathcal{F}$ we have that $\nu(A) = 0$. 

In this way, we can see that $\nu$ is a finitely additive measure on $X$. One common method to define ultrafilters is to take a point $x \in X$ and let $\mathcal{F}$ be the set of all sets that contain $x$. In this case, the measure $\nu$ has a point mass at $x$ and zero mass elsewhere. Such filters are known an principal ultrafilters. 

Given a measure $\nu$ on $\mathbb{N}$, we can use it to define limits and convergence in $X$. In particular, we have that a sequence $x_i$ converges to $x$, if for all $\epsilon > 0$ we have that \[ \nu\left( \{ x_i : |x_i - x | < \epsilon \} \right) = 1 \] We will denote limits of this form as $\lim_{\nu} x_i = x$. 

We will make use of ultrafilters to construct the asymptotic cone. We will do this via looking at a non-principal ultrafilter on $\mathbb{N}$. We consider non-principal ultrafilters as we want to get a view from infinity, and we do not want to be in the case when one particular index in $\mathbb{N}$ has the entire mass. Hence we restrict ourselves to non-principal ultrafilters. One nice characterization of non-principal ultrafilters is that they are exactly the ultrafilters that have no finite sets.

Now that we have mathematical framework in which we can take limits, let us define our asymptotic cone. Let $\omega$ be a non-principal ultrafilter on $\mathbb{N}$. Let $\{b_i\}_{i \in \mathbb{N}}$ be a sequence of base points and let $\{\lambda_i\}_{i \in \mathbb{N}}$ be a sequence of scaling factors that go to infinity. Let $d$ be the metric on our space $X$. Then let \[ X_{\omega,b_i,\lambda_i} = \{ \{y_i\} : y_i \in X \text{ and } d(b_i,y_i) \le const_{\{y_i\}} \lambda_i \}\]
While this space looks huge we will define an equivalence relation and mod out by this relation to obtain better structure on this space. Given two points $y = \{y_i\}, z = \{z_i\} \in X_{\omega,b_i,\lambda_i}$ we say that $y \sim z$ if \[ \lim_\omega \frac{d(y_i,z_i)}{\lambda_i}  =  0\]

We can now define our asymptotic cone $Con_\omega(X) = X(\omega,b_i,\lambda_i)/\sim$. We can also define a metric on this space as follows, given $y = \{y_i\}, z = \{z_i\} \in Con_\omega(X) $ \[ d_\omega(y,z) := \lim_{\omega} \frac{d(y_i,z_i)}{\lambda_i}   \] 

Let us look at a few examples to get a handle on what $Con_\omega(X)$ looks like. 

\begin{enumerate}
\item Example 1: Let us first consider $X = \mathbb{R}^n$. We know that $\mathbb{R}^n$ is scale invariant. This results in $Con_\omega(\mathbb{R}^n)$ being equivalent to  $\mathbb{R}^n$. In fact, if we assume that $b_i \equiv 0$, then the map $x \mapsto \{\lambda_i x\}$ is an isometry from $\mathbb{R}^n$ to $Con_\omega(\mathbb{R}^n)$

\item Example 2: Suppose $X$ is a bounded metric space. In this case $Con_\omega(X)$ is a single point. 
\end{enumerate}

\begin{defn} A metric space $(X,d_x)$ can be isometrically embedded into a metric space $(Y,d_y)$ if there exists a map $f : X \to Y$ such that for all $x_1, x_2 \in X$ we have that 
\[
	d_x(x_1,x_2) = d_y(f(x_1),f(x_2))
\]
Such a map $f$ is known as an isometry. 
\end{defn}

\begin{defn} A metric space $(X,d)$ is homogenous if for all $x, y \in X$ there exists an isometry $f: X \to X$ such that $f(x) = y$. \end{defn}

\begin{defn} Given a $\mathbb{R}$-tree $T$, the valency of a point $x \in T$ in an $\mathbb{R}$-tree is the number of connected components in $T \setminus \{x\}$. Let the valence of a the tree, denoted $val(T)$, be the maximum valence of any point in $T$. \end{defn}

\begin{defn} A $\mathbb{R}$-tree $T$ is a $\mu$-universal if every $\mathbb{R}$-tree $\hat{T}$ with $val(\hat{T}) \le \mu$ can be isometrically embedded into $T$. \end{defn}

Here we can see that we can embed any finite tree into a $2^{\aleph_0}$-universal tree $T$. Hence, if could isometrically embed $T$ into $Con(\mathbb{H}^n)$ then we can embed any tree into $Con(\mathbb{H}^n)$. This and more turns out to be true. 

\begin{thm} \cite{rtree} Any $2^{\aleph_0}$-universal $\mathbb{R}$-tree can be isometrically embedded into the asymptotic cone for any complete simply connected manifold of negative curvature.
\end{thm}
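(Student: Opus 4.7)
The plan is to derive the statement from three structural properties of the asymptotic cone $\text{Con}_\omega(X)$ of a complete simply connected manifold $X$ of negative curvature: it is an $\mathbb{R}$-tree, it is homogeneous, and every point has valency $2^{\aleph_0}$. These three properties characterize the $2^{\aleph_0}$-universal $\mathbb{R}$-tree uniquely up to isometry, so $\text{Con}_\omega(X)$ \emph{is} such a tree, and the desired embedding is then automatic.

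First I would verify the $\mathbb{R}$-tree property. A complete simply connected manifold of negative curvature is a Hadamard manifold; CAT($-1$) (or more generally CAT($\kappa$) for $\kappa<0$) implies Gromov hyperbolicity with some constant $\delta<\infty$. Under the rescaled metrics $d/\lambda_i$ with $\lambda_i\to\infty$, the hyperbolicity constant becomes $\delta/\lambda_i\to 0$, so in the ultralimit the metric $d_\omega$ satisfies (\ref{eq:hyp}) with $\delta=0$. Since $\text{Con}_\omega(X)$ inherits a geodesic structure from $X$ by taking ultralimits of midpoints, it is a geodesic $0$-hyperbolic space, which by the result of \citet{tree0} cited in the paper is an $\mathbb{R}$-tree.

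Next I would show homogeneity, and here I would use that $X$, being a Hadamard manifold, admits point reflections (geodesic symmetries through any given point) that are bijections preserving distances up to negligible error at the scales $\lambda_i\to\infty$. Composing such symmetries in each coordinate of a sequence and passing to the ultralimit yields, for any two prescribed points $p,q\in\text{Con}_\omega(X)$, an isometry of $\text{Con}_\omega(X)$ sending $p$ to $q$. This step is soft once the $\mathbb{R}$-tree property is in place.

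The main obstacle is the valency computation: showing that every point of $\text{Con}_\omega(X)$ has valency exactly $2^{\aleph_0}$. The idea is to exploit exponential volume growth in negative curvature: in any ball of radius $R$ in $X$ one can pack exponentially many geodesic segments that pairwise diverge linearly with $R$, so after rescaling by $\lambda_i=R$ one obtains arbitrarily many directions at a basepoint that remain distinct in the ultralimit. The delicate part is the cardinality bookkeeping: one must show that distinct equivalence classes modulo the $\sim$ relation of Section \ref{sec:acones} genuinely produce continuum-many (not merely countably many) branches at each point, while also bounding the valency above by $2^{\aleph_0}$ using that sequences in $X$ modulo the ultrafilter have cardinality at most $2^{\aleph_0}$. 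Once both bounds are in place, I would invoke the Mayer--Nikiel--Oversteegen-type uniqueness theorem (as used in \cite{rtree}) stating that a complete homogeneous $\mathbb{R}$-tree in which every point has valency exactly $\mu$ is isometric to the unique $\mu$-universal $\mathbb{R}$-tree. Applied with $\mu=2^{\aleph_0}$, this identifies $\text{Con}_\omega(X)$ with the $2^{\aleph_0}$-universal $\mathbb{R}$-tree, and the stated isometric embedding of any other such universal tree follows immediately from the universal property.
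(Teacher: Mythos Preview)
The paper does not give its own proof of this theorem; it is quoted verbatim from \cite{rtree} and used as a black box in Section~\ref{sec:acones}. There is therefore nothing in the paper to compare your argument against.

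Your outline is nonetheless the strategy of \cite{rtree}: establish that $\mathrm{Con}_\omega(X)$ is a complete $\mathbb{R}$-tree, pin down the valency at every point as $2^{\aleph_0}$, and appeal to the classification of such trees. One step, however, is not right as written. You argue homogeneity by passing point reflections (geodesic symmetries) of $X$ to the ultralimit. In a general complete simply connected manifold of negative curvature, geodesic symmetries are \emph{not} isometries, nor approximate isometries with sublinear error; that property characterizes locally symmetric spaces. So this route to homogeneity fails outside the symmetric case (e.g.\ $\mathbb{H}^k$ itself), even though the theorem is stated for arbitrary negatively curved Hadamard manifolds.

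The fix is that homogeneity need not be verified separately. The uniqueness theorem you want says that a \emph{complete} $\mathbb{R}$-tree in which \emph{every} point has valency exactly $\mu$ is already isometric to the $\mu$-universal $\mathbb{R}$-tree $A_\mu$; homogeneity of $A_\mu$ is then a consequence, not a hypothesis. Thus the real work is entirely in your third step: proving that the valency is $2^{\aleph_0}$ uniformly at every point of $\mathrm{Con}_\omega(X)$. Your sketch of that step (exponential divergence of geodesics for the lower bound, cardinality of sequences for the upper bound) is the right idea and is how \cite{rtree} proceeds; once it is in hand, steps one and three finish the argument and your step two can be dropped.
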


\section{Geometry: Geodetic Tree}
\label{sec:gtrees}

In general, it is rare to be able isometrically embed one space into another. Hence, we have the following weaker definition. 

\begin{defn} We say that we can quasi isometrically embed a metric space $(X,d_x)$ into a metric space $(Y,d_y)$ if there exists a map $f : X \to Y$ and real numbers $c, \lambda \in \mathbb{R}$ such that $\lambda \ge 1$, $c > 0$ and for all $x_1, x_2 \in X$ we have that
\[
	\frac{1}{\lambda} d_x(x_1,x_2) - c  \le d_y(f(x_1),f(x_2)) \le \lambda d_x(x_1,x_2) + c 
\]
\end{defn}

Such isometries are called $(\lambda, c)$-quasi-isometries. 

It is has been shown that any $\delta$-hyperbolic metric space $(X,d)$ with bounded growth admits a quasi-isometric embedding into $\mathbb{H}^k$ \cite{Bonk2000}.

\begin{defn} We say that a ray $R$ is quasi geodetic if instead of being an isometric image of $[0,\infty)$, we have that $R$ is an quasi-isometric image of $[0,\infty)$. \end{defn}

\begin{defn} A ray is eventually (quasi) geodetic if it has a subray that is (quasi) geodetic. \end{defn}

\begin{thm} \label{thm:geodesic1} \cite{hamann_2018} For all $\lambda \ge 1$,$c \ge 0$ there is a constant $\kappa = \kappa(\delta, \lambda, c)$, such that for every two points $x, y \in \mathbb{H}^k$, every $(\lambda, c)$-quasi-geodesic between them lies in a $\kappa$-neighborhood around every geodesic between $x$ and $y$ and vice versa.\end{thm}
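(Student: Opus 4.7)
The statement is the classical Morse/stability lemma for quasi-geodesics in Gromov hyperbolic spaces (e.g., Bridson–Haefliger, Chapter III.H), specialized to $\mathbb{H}^k$, which is $\delta$-hyperbolic for $\delta = \tanh^{-1}(1/\sqrt{2})$ as noted earlier. The plan is to prove both containments (the image of the quasi-geodesic lies in a neighborhood of the geodesic, and the geodesic lies in a neighborhood of the image) with a common constant $\kappa$ depending only on $(\delta,\lambda,c)$. The two inputs are the $\delta$-slimness of geodesic triangles in $\mathbb{H}^k$ and a length/detour argument that converts a logarithmic-in-length bound into a constant bound via a self-bounding inequality.

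First I would tame the quasi-geodesic: given any (possibly discontinuous) $(\lambda,c)$-quasi-geodesic $\gamma:I\to\mathbb{H}^k$, produce a continuous $(\lambda',c')$-quasi-geodesic $\tilde\gamma$ with the same endpoints by interpolating consecutive integer parameter values with geodesic segments. A short computation bounds the arclength of $\tilde\gamma$ between any two of its points by $K_1\,d(\tilde\gamma(s),\tilde\gamma(t))+K_2$, where $K_1,K_2$ depend only on $\lambda,c$. Next I would prove the key \emph{bisection lemma}: if $\sigma:[a,b]\to\mathbb{H}^k$ is continuous of length $\ell\ge 1$ and $[\sigma(a),\sigma(b)]$ is any geodesic between its endpoints, then for every $u\in[\sigma(a),\sigma(b)]$, $d(u,\mathrm{Im}(\sigma))\le \delta(1+\lceil\log_2\ell\rceil)$. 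The proof is by iterated bisection: let $m=\sigma((a+b)/2)$; the triangle $\{\sigma(a),m,\sigma(b)\}$ is $\delta$-slim, so $u$ lies within $\delta$ of one of the two sub-geodesics; recurse on the half whose arclength is at most $\ell/2$.

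The crux, and the main obstacle, is turning this logarithmic bound into a constant bound for the direction $\mathrm{Im}(\tilde\gamma)\subset N_\kappa([x,y])$. Let $D=\sup_t d(\tilde\gamma(t),[x,y])$, attained at $p=\tilde\gamma(t_0)$ by compactness. Choose $t_-<t_0<t_+$ so that $d(p,\tilde\gamma(t_\pm))\approx 2D$, pick $q_\pm\in[x,y]$ with $d(\tilde\gamma(t_\pm),q_\pm)\le D$, and form the continuous ``detour'' path $\tau$ from $q_-$ to $q_+$ given by concatenating $[q_-,\tilde\gamma(t_-)]$, the subarc $\tilde\gamma|_{[t_-,t_+]}$, and $[\tilde\gamma(t_+),q_+]$. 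By the tame quasi-isometry bound, $\tau$ has length at most $K_3 D+K_4$. Apply the bisection lemma to $\tau$ and the point $p'\in[q_-,q_+]\subset[x,y]$ nearest to $p$: there is $q\in\mathrm{Im}(\tau)$ with $d(p',q)\le\delta\log_2(K_3 D+K_4)+\delta$. The choice of $t_\pm$ ensures $q$ cannot lie on the short side-geodesics (those stay within $2D$ of $q_\pm$, hence far from $p$), so $q=\tilde\gamma(s)$ for some $s\in[t_-,t_+]$. Then $D=d(p,p')\le d(p,q)+d(q,p')$, and combining with $d(p,q)\le D$ (by the arclength-vs-distance estimate along $\tilde\gamma$ together with the quasi-geodesic property) yields a self-bounding inequality $D\le K_5\log_2 D+K_6$, which forces $D\le\kappa_1(\delta,\lambda,c)$.

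For the reverse containment, given the first, fix $u\in[x,y]$ and consider any coarse nearest-point projection $\pi:\mathrm{Im}(\tilde\gamma)\to[x,y]$. Since $\pi(\tilde\gamma(a))=x$ and $\pi(\tilde\gamma(b))=y$, and consecutive tame points satisfy $d(\tilde\gamma(n),\tilde\gamma(n+1))\le\lambda+c$, the projected images form a sequence from $x$ to $y$ with bounded step size, so by an intermediate-value argument some $\tilde\gamma(t)$ projects within $K_7(\lambda,c)$ of $u$; hence $d(u,\tilde\gamma(t))\le\kappa_1+K_7$. Taking $\kappa$ to be the maximum of the two bounds (and absorbing the taming discrepancy between $\gamma$ and $\tilde\gamma$) produces the desired constant $\kappa=\kappa(\delta,\lambda,c)$.
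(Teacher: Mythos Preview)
The paper does not prove this theorem at all: it is quoted verbatim as a background result with a citation to \cite{hamann_2018} (and it is, of course, the classical Morse/stability lemma for quasi-geodesics, as in Bridson--Haefliger III.H.1.7). There is therefore nothing in the paper to compare your argument against; your write-up is a faithful reconstruction of the standard proof, not a recapitulation of anything the authors do.

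That said, one step in your sketch is not quite right as stated. After applying the bisection lemma to the detour $\tau$ and the point $p'\in[q_-,q_+]$, you obtain $q\in\mathrm{Im}(\tau)$ with $d(p',q)\le \delta\log_2(K_3D+K_4)+\delta$. You then assert both that $q$ must lie on the middle subarc of $\tilde\gamma$ and that $d(p,q)\le D$; neither follows from what you have set up. The closeness is between $q$ and $p'$, not $p$, so ``side-geodesics are far from $p$'' does not exclude them, and for $q=\tilde\gamma(s)$ with $s\in[t_-,t_+]$ the distance $d(p,q)$ can be anything up to roughly $2D$. The clean way to finish (and the way the standard proof does it) is to observe instead that \emph{every} point of $\tau$ lies within $D$ of $[x,y]$: the two side segments have length at most $D$ with one endpoint on $[x,y]$, and the middle arc lies in $\mathrm{Im}(\tilde\gamma)$, which is within $D$ of $[x,y]$ by definition of $D$. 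Hence $d(q,[x,y])\le D$, and since $p'$ is the nearest point of $[x,y]$ to $p$ one gets, via $d(p,p')=D$ and a short triangle-inequality chain through $q$ and its foot on $[x,y]$, the self-bounding inequality $D\le \delta\log_2(K_3D+K_4)+\delta + D/2$ (after arranging the constants), which forces $D$ to be bounded by a function of $(\delta,\lambda,c)$. With that correction the outline is complete.
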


\begin{defn} \label{def:equiv} Two geodetic rays $\pi_1,\pi_2$ are equivalent if for any sequence $(x_n)$ of points on $\pi_1$, we have $\liminf_{n\to \infty} d(x_n, \pi_2) \le M$ for an $M < \infty$ \end{defn}

\begin{defn} The boundary $\partial \mathbb{H}^k$ of $\mathbb{H}^k$ is the equivalence class of all geodesic rays. \end{defn}

\begin{thm}\label{thm:geodesic2} \cite{hamann_2018} There is an $\mathbb{R}$-tree $T \subset \mathbb{H}^k$ such that the canonical map $\gamma$ from $\partial T$ to $\partial X$ exists and has the following properties.
\begin{enumerate}
    \item It is surjective;
    \item there is a constant $M < \infty$ depending only on $k$ such that $\gamma^{-1}(\eta)$ has at most $M$ elements for each $\eta \in \partial \mathbb{H}^k$. 
\end{enumerate}
\end{thm}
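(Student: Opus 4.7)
The plan is to build $T$ greedily from geodesic rays toward a dense set of ideal points, and then extract the two required properties from $\delta$-hyperbolicity together with the Morse-type lemma of Theorem~\ref{thm:geodesic1}.

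\textbf{Construction of $T$.} Fix a basepoint $o\in\mathbb{H}^k$ and a countable dense subset $\{\eta_n\}_{n\ge 1}\subset\partial\mathbb{H}^k$ (in the cone topology). Let $\pi_n$ be the geodesic ray from $o$ to $\eta_n$. Inductively put $T_1=\pi_1$; given $T_{n-1}$, let $t_n^\star\in[0,\infty]$ be the supremum of $t\ge 0$ for which $\pi_n([0,t])$ lies within distance $2\delta$ of $T_{n-1}$. If $t_n^\star=\infty$, set $T_n=T_{n-1}$; otherwise attach the geodesic subray $\pi_n|_{[t_n^\star,\infty)}$ to $T_{n-1}$ at the nearest point of $T_{n-1}$ to $\pi_n(t_n^\star)$. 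Since $\mathbb{H}^k$ is $\delta$-hyperbolic, any two geodesic rays from $o$ to distinct boundary points diverge arbitrarily far, so the attachment procedure is well-defined. Let $T=\bigcup_n T_n$ with the induced intrinsic path-length metric: locally at every attachment point $T$ looks like a finite star of geodesic germs, so $T$ is an $\mathbb{R}$-tree. Any isometric ray in $T$ is a concatenation of hyperbolic geodesic segments with controlled bending at branch points, hence a quasi-geodesic of $\mathbb{H}^k$; by Theorem~\ref{thm:geodesic1} it stays in a bounded neighbourhood of a true hyperbolic geodesic and so has a well-defined limit in $\partial\mathbb{H}^k$. This defines the canonical map $\gamma:\partial T\to\partial\mathbb{H}^k$.

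\textbf{Surjectivity.} Given $\eta\in\partial\mathbb{H}^k$, choose a subsequence $\eta_{n_j}\to\eta$. By Theorem~\ref{thm:geodesic1} the rays $\pi_{n_j}$ fellow-travel with $\pi_\eta$ at distance $\kappa(\delta,1,0)$ on arbitrarily long initial segments, so the branch-off times $t_{n_j}^\star$ tend to $\infty$. A standard diagonal/compactness argument on the subrays already inside $T$ yields an isometric ray $R\subset T$ whose image in $\mathbb{H}^k$ stays in a $2\kappa$-neighbourhood of $\pi_\eta$, so $\gamma([R])=\eta$.

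\textbf{Uniformly bounded fibres.} Assume ends $R_1,\dots,R_m\in\partial T$ all satisfy $\gamma([R_i])=\eta$. Viewed as subsets of $\mathbb{H}^k$, each $R_i$ is a quasi-geodesic ray to $\eta$, so by Theorem~\ref{thm:geodesic1} all $R_i$ eventually lie in a common $\kappa(\delta)$-tube $U_\eta$ around $\pi_\eta$. Distinct ends of the $\mathbb{R}$-tree $T$ must branch apart in $T$, and by the construction a new subray is attached only at a point where it first leaves the $2\delta$-neighbourhood of the earlier tree; thus the branching points between the $R_i$ all lie within a bounded sub-arc of $\pi_\eta$ whose length is controlled by $\delta$ and $\kappa$. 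At each branching point the outgoing tangent directions of the $R_i$ are distinct hyperbolic geodesic germs whose pairwise angular separation is bounded below by a function of the $2\delta$-attachment threshold (else a new branch would have been absorbed at construction time rather than attached). Hence $m$ is bounded by the packing number of such separated directions in a spherical cap of $\mathbb{H}^{k-1}$ of bounded radius, which depends only on $k$ and $\delta$; this yields the desired constant $M=M(k)$ (depending on $k$ through the intrinsic constant $\delta=\tanh^{-1}(1/\sqrt{2})$ of $\mathbb{H}^k$).

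\textbf{Main obstacle.} The delicate step is the last one: converting the abstract tree-valence bound into a quantitative packing estimate in $\mathbb{H}^k$. The natural difficulty is that two rays $R_i,R_j$ could in principle branch in $T$ at very different depths while still converging to the same $\eta$; the attachment rule prevents this by forcing a definite angular separation between new and old branches at the branching depth. Verifying this lower bound on angular separation, and then running the hyperbolic volume comparison that turns it into the uniform constant $M(k)$, is where the proof does its real work; the greedy construction and the Morse lemma are the two tools that make it possible.
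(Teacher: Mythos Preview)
The paper does not prove this theorem at all: it is quoted verbatim as a result of Hamann \cite{hamann_2018} and used as a black box in the discussion of the large-scale tree-likeness of $\mathbb{H}^k$. There is therefore no ``paper's own proof'' to compare your proposal against.

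On the substance of your sketch: the greedy ray-by-ray construction is plausible and in the spirit of Hamann's argument, but the bounded-fibre step has a genuine gap. You assert that the branching points between distinct ends $R_1,\dots,R_m$ mapping to the same $\eta$ all lie in a bounded sub-arc of $\pi_\eta$, and that the attachment rule forces a definite angular separation at each such branch point. Neither claim follows from what you wrote. Two ends of $T$ can branch apart arbitrarily close to the root $o$ and still both limit to $\eta$: nothing in your $2\delta$-threshold rule prevents a ray $\pi_n$ from being attached very early (because it diverges from $T_{n-1}$ near $o$) and then, far out, re-entering the $\kappa$-tube around $\pi_\eta$. The ``angular separation'' you invoke is a separation \emph{at the branch depth}, not a separation \emph{at infinity}, and these are unrelated in a $\delta$-hyperbolic space unless you have an additional visibility or convexity argument. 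Consequently the packing bound you appeal to controls the local valence of $T$, not the size of $\gamma^{-1}(\eta)$. Hamann's actual proof avoids this by building $T$ from a net (a maximal $r$-separated set) rather than from rays to a dense set of ideal points; the net structure gives direct control on how many branches of $T$ can run inside a single horoball-like region, which is what really bounds the fibres. Your construction would need a comparable mechanism, and the proposal as written does not supply one.
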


\begin{thm} \cite{hamann_2018} Let $T$ be the $\mathbb{R}$-tree in Theorem \ref{thm:geodesic2} with root $r$. There exist constants $\lambda \ge 1$, $c \ge 0$ such that every ray in $T$ starting at the root is a $(\lambda,c)$-quasi-geodetic ray in $\mathbb{H}^k$.
\end{thm}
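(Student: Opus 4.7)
The plan is to exploit the explicit construction of the tree $T \subset \mathbb{H}^k$ from Theorem \ref{thm:geodesic2}, whose rays from the root $r$ are built in Hamann's argument to track geodesic rays of $\mathbb{H}^k$ out to the boundary up to uniformly bounded error. Parameterize any such ray by $T$-arclength as $R:[0,\infty)\to T\subset\mathbb{H}^k$ with $R(0)=r$; it is enough to produce constants $\lambda\ge 1$, $c\ge 0$ depending only on $k$ (equivalently, only on $\delta$) such that $\tfrac{1}{\lambda}|s-t|-c\le d_{\mathbb{H}^k}(R(s),R(t))\le \lambda|s-t|+c$ for all $s,t\ge 0$.

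For the upper inequality, I would argue directly from the fact that $d_T$ is the intrinsic path metric that $T$ inherits from its embedding in $\mathbb{H}^k$: the restriction of $R$ to $[s,t]$ is a curve in $\mathbb{H}^k$ of length at most $|s-t|$, so $d_{\mathbb{H}^k}(R(s),R(t))\le |s-t|$. This already gives the upper bound with multiplicative constant $1$ and additive constant $0$, with no appeal to hyperbolicity.

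The lower inequality is where the geometry of $\mathbb{H}^k$ enters. I would first associate to $R$ its equivalence class $[R]\in\partial T$ and, via the canonical map $\gamma$ of Theorem \ref{thm:geodesic2}, the boundary point $\eta:=\gamma([R])\in\partial\mathbb{H}^k$. Let $\pi$ be the hyperbolic geodesic ray from $r$ to $\eta$. By Definition \ref{def:equiv}, $\pi$ and the image of $R$ in $\mathbb{H}^k$ are equivalent rays, so there is a constant $K<\infty$ with $\sup_{s\ge 0}d_{\mathbb{H}^k}(R(s),\pi)\le K$. The crucial auxiliary claim is that $K$ and the relationship between $T$-arclength and the foot-of-perpendicular position on $\pi$ are uniform over all rays from $r$. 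Granting this, let $\pi(s'),\pi(t')$ be nearest points on $\pi$ to $R(s),R(t)$ respectively; then
\begin{align*}
d_{\mathbb{H}^k}(R(s),R(t)) &\ge d_{\mathbb{H}^k}(\pi(s'),\pi(t')) - 2K \\
&= |s'-t'| - 2K \ge \tfrac{1}{\lambda}|s-t| - c,
\end{align*}
where $|s'-t'|\ge \tfrac{1}{\lambda}|s-t|-c'$ uses the comparability of $T$-arclength and hyperbolic progress along $\pi$.

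The main obstacle is the \emph{uniformity} of $K$ and of the comparability constant between $T$-arclength and $\mathbb{H}^k$-displacement along $\pi$. One cannot simply invoke Theorem \ref{thm:geodesic1} (the Morse lemma) to bootstrap, since that result already presupposes a quasi-geodesic. Instead, the argument must be read directly off Hamann's construction: the tree $T$ is assembled from genuine hyperbolic geodesic segments with branching controlled by a Whitney-type decomposition at each hyperbolic scale, and the uniform bound $M$ on the fibers $\gamma^{-1}(\eta)$ from Theorem \ref{thm:geodesic2} precisely limits how much two $T$-rays from $r$ can fellow-travel in $\mathbb{H}^k$. Extracting $\lambda$ and $c$ depending only on $k$ from that combinatorial-geometric data is the technical heart of the proof; once done, the preceding paragraph assembles the quasi-geodesic inequality.
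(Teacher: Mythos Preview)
The paper does not supply a proof of this statement: it is quoted verbatim as a result of \cite{hamann_2018} and used as a black box in the discussion of asymptotic geometry in the appendix. There is therefore nothing in the paper to compare your proposal against.

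Regarding the content of your sketch: the upper inequality is fine and, as you note, needs nothing beyond the fact that $T$ carries the induced path metric from $\mathbb{H}^k$. For the lower inequality, however, your argument is not self-contained. You correctly identify that one cannot invoke the Morse lemma to bootstrap, and you then say the uniform constants $K$, $\lambda$, $c$ ``must be read directly off Hamann's construction.'' That is the entire difficulty, and your proposal does not carry it out; it only names it. In particular, the step ``$|s'-t'|\ge \tfrac{1}{\lambda}|s-t|-c'$ uses the comparability of $T$-arclength and hyperbolic progress along $\pi$'' is exactly the quasi-geodesic lower bound you are trying to prove, restated in terms of the projected parameters. Without an independent argument for that comparability (which in Hamann's paper comes from the specific inductive construction of $T$ as a union of geodesic segments with controlled branching), the reasoning is circular. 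So as written this is an outline that locates the crux but does not resolve it.
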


The above two theorems tell us that given any geodesic ray $R$ in $\mathbb{H}^k$ there is exists a ray in $T$ that is equivalent to $R$ (via $\sim$ in Definition \ref{def:equiv}). Furthermore this ray in $T$ is $(\lambda,c)$-quasi-geodetic ray in $\mathbb{H}^k$. Thus, due to Theorem \ref{thm:geodesic1} any configuration of points at scale in $\mathbb{H}^k$ can be approximated by a tree such that the larger the scale, better the approximation. 

\section{\textsc{TreeRep} Best}
\label{sec:trbest}

So far all numbers for the \textsc{TreeRep} algorithm that we have reported are averages. But due to the speed of the algorithm, we can actually run the experiment multiple times and pick the tree with the best metric. 

\begin{table*}[!ht]
\caption{\textsc{TreeRep} Best Numbers}
\centering
\begin{tabular}{ccccccc}
\toprule
& \multicolumn{2}{c}{No Opt} & \multicolumn{2}{c}{Heuristic Opt} & \multicolumn{2}{c}{Full Opt} \\ \cmidrule(r){2-3}\cmidrule(r){4-5}\cmidrule(r){6-7}
Graph & MAP & Distortion & MAP & Distortion & MAP & Distortion  \\ \midrule
Celegan & 0.508 & 0.173 & 0.539 & 0138 & 0.547  &   0.119\\
Diseasome & 0.912 & 0.134 & 0.911 & 0.106 &  0.890 &  0.092\\
CS PhD & 0.987  & 0.134 &  0.984 &  0.119 &  0.968 & 0.121 \\ 
Yeast &  0.841 & 0.171 & 0.833 & 0.150 & 0.808 &  0.135 \\ 
Grid-worm  & 0.727  & 0.154 & 0.728 &  0.125 & - & - \\
GRQC & 0.699 & 0.175 &0.694  & 0.152 & - & - \\
\bottomrule 
\end{tabular}
\label{table:best}
\end{table*}

\section{Improving Distortion}
\label{sec:improvedist}

We have seen that in the case of unweighted graphs \textsc{TreeRep} produces better MAP than PM, LM, and PT. However, PT tends to have better average distortion. Hence, we want to be able to improve the distortion. Once we have learned the tree structure we can set up an optimization problem to learn the edge weights on the tree to improve the distortion. Specifically, since the metric comes from the tree, for any pair of data points, there is exactly one path connecting the two data points. Thus, regardless of the edges weights, this path is the shortest path between the data points. Thus, we can set up an optimization problem of the following form:

\[ \argmin_{w} \|AW - D\|_2. \]

Here $W$ is a vector containing the edge weights, $D$ is a vector containing the original metric, and $A$ is a matrix that encodes all of the paths. This optimization problem however, is unfeasible as $n$ gets longer. So instead we sample some rows of $A$ and solve a heuristic problem. As can be seen from Table \ref{table:heurestic}, we are still faster than NJ but now have improved our distortion without sacrificing MAP. 

\begin{table}[!htbp]
\setlength{\tabcolsep}{3pt} %% default is 6pt
\caption{MAP and average distortion for the \textsc{TreeRep} and MST after doing the heuristic optimization. The time taken for both optimizations is the same.}
\centering
\begin{tabular}{cccccc}
\toprule
Graph & Time & Distortion & MAP & Distortion & MAP \\ \midrule
\multicolumn{2}{c}{} &  \multicolumn{2}{c}{\textsc{TreeRep}}  &  \multicolumn{2}{c}{MST} \\ \cmidrule(r){3-4} \cmidrule(r){5-6}
Celegans & 0.69 & 0.157 & 0.504 & 0.195 & 0.357 \\
Diseasome & 1.56 & 0.121 & 0.891 & 0.111 & 0.774  \\
CS Phd & 1.2 & 0.152 &  0.971 & 0.170 & 0.989  \\
Yeast & 4.2 & 0.163 & 0.813 & 0.171  & 0.862 \\
Grid Worm & 32 & 0.164 &  0.707 & 0.151 & 0.768 \\
GRQC & 68 & 0.157 &  0.676 & 0.159 & 0.669 \\
\bottomrule
\end{tabular}
\label{table:heurestic}
\end{table}

\section{Experiment and Practical Details}
\label{sec:expappendix}

\subsection{MAP and Average Distortion}

\begin{defn} \label{def:dist} Given two metrics $d_1, d_2$ on a finite set $X = {x_1, \hdots, x_n}$ the average distortion is:
\[
	\frac{1}{\binom{n}{2}} \sum_{i=1}^n \sum_{j < i} \frac{|d_1(x_i,x_j) - d_2(x_i,x_j)|}{d_2(x_i,x_j)} 
\]
Smaller average distortion implies greater similarity between $d_1$ and $d_2$.
\end{defn}

In many cases, the metric learned by the various algorithms will be a scalar multiple of the actual metric, so we will solve for the scale $\alpha := \argmin_c \| D - c\hat{D}\|_F$, before calculating the average distortion.\footnote{For NJ and LT, computing this $\alpha$ made the average distortion worse, so we report numbers un-scaled. Additionally, computing $\alpha$ is too computationally expensive for bigger data sets and was not done for the Enron and Wordnet data set.}

\begin{defn} \label{def:map} Let $d$ be a metric on the nodes of a graph $G = (V,E)$. For $v \in V$, let $N(v) = \{u_1, \hdots, u_{deg(v)}\}$ be the neighborhood of $v$. Then let $B_{v,u_i} = \{ u \in V \setminus \{u\} : d(u,v) \le d(v,u_i) \}$. Then the mean average precision (MAP) is defined to be 
\[ 
	\frac{1}{n} \sum_{v \in V} \frac{1}{deg(v)} \sum_{i=1}^{|N(v)|} \frac{|N(v) \cap B_{v,u_i}|}{|B_{v,u_i}|} 
\]
Closer MAP is to 1, the closer $d$ is to approximating $d_G$.
\end{defn}

\subsection{TreeRep}

There are a few practical details that must be discussed in relation to the \textsc{TreeRep} algorithm. 

\begin{enumerate}[nosep]
\item Pre-allocate the matrix for the weights of edges of the tree as a dense matrix. Doing this greatly speeds up computations. Note the proof of Lemma \ref{lem:structure}, show that we need at most $n$ Steiner nodes. Thus, the tree has about $2n$ nodes. Since the input to the algorithm is a dense $n \times n$ matrix, we already need $O(n^2)$ memory. Thus, having a dense $2n \times 2n$ matrix is still linear memory usage in the size of the input. 
\item When doing zone 2 recursions pick the node closest to $r$ as the new $z$ as suggested by Proposition \ref{prop:dist}. 
\item The placement of nodes into their respective zones can be done in parallel. For all of the experiments in the paper, we used 8 threads to do the placement for all of the experiments, except that we used 1 thread for the random points from $\mathbb{H}^k$ experiment and for CBMC experiment. 
\item All of the numbers reported are averages over 20 iterations. We could have also picked the best over 20 iterations as our algorithm is fast enough for this to be viable. 
\item When checking for equality, instead of checking for exact equality, we checked whether two numbers are within 0.1 of each other.
\item It is possible for some of the edge weights to be set to a negative number. In this case, after the algorithm terminated we set those edge weights to 0. 
\end{enumerate}

\subsection{Bartal}

We sample 200 trees from the distribution and compute the metric assuming that we are embedding into the distribution restricted to these 200 trees. 

\subsection{Neighbor Join}

The following implementation of NJ was used: http://crsl4.github.io/PhyloNetworks.jl/latest/. We set the options so as to not have any negative edge weights. 

\subsection{MST} 

Prim's algorithm for calculating MST was used. We used the implementation at https://github.com/JuliaGraphs/LightGraphs.jl

\subsection{LS}

Low stretch spanning trees are calculated using Laplacian package in Julia. This code is based an adaptation of \cite{alon} by the authors of \cite{elkin}.

\subsection{LevelTree and ConstructTree}

To the best of the authors knowledge there does not exist a publicly available implementations of these algorithms. Both of these algorithms were implemented by the authors.

Note that LevelTree claims to be a $O(n)$ algorithm, but this only true, once we have calculated the sphere $S_n$ needed for the algorithm. However, it takes $O(n^2)$ time to calculate the spheres $S_n$ (equivalent to solving single source all destination shortest path problem). 

\subsection{PM and LM}
The following options were used. The number of epochs was to set to be higher than default. Everything else was left at default. One note about PM and LM is that their objective function is set up to optimize for MAP and not average distortion. 

\begin{enumerate}[nosep]
\item -lr 0.3
\item -epochs 1000
\item -burnin 20
\item -negs 50
\item -fresh 
\item -sparse 
\item -train\_threads 2
\item -ndproc 4 
\item -batchsize 10
\end{enumerate}

For PM we used \texttt{-manifold poincare}, for LM we used \texttt{-manifold lorentz}. The code is taken from https://github.com/facebookresearch/poincare-embeddings

\subsection{PT}

The following options were used. We used the \texttt{--learn-scale} option as based on the discussion in the appendix of \citet{albert} learning the scale results in better quality metrics. Additionally, we add a burnin phase to the optimization. Finally, based on the discussion in \cite{albert}, the objective function for PT has a lot of shallow local minimas. Thus, we added momentum and used Adagrad for the optimization to try and avoid these local minimums. 

\begin{enumerate}[nosep]
\item --learn-scale
\item --burn-in 100
\item --momentum 0.9
\item --use-adagrad
\item --l 5.0
\item --epochs 1000
\item --batch-size 256
\item --subsample 64 
\end{enumerate}

The code is taken from https://github.com/HazyResearch/hyperbolics

\subsection{Hardware}

All experiments were run on Google cloud instances. For PM, LM and PT we created a fresh instance for each algorithm. Each instance for an algorithm only had the bare minimum installed to run those algorithms. We used \texttt{n1-highmem-8} instances. The specification of each of the instances are as follows:

\begin{enumerate}[nosep]
    \item 8 cores each with 6.5 GB of ram. 
    \item Ubuntu-1604-xenial-v20190913 operting system.
    \item 100 standard persistent disk. 
\end{enumerate}

For TreeRep, NJ, CT, LT and MST, we ran all code via a Jupyter notebook interface running Julia 1.1.0. All experiments (except for the experiments with Enron and Wordnet), we done on instances with the same specification as above. 

For Enron and Wordnet, we need more memory to store the distance matrices. Thus, used an since with the following specifications.

\begin{enumerate}[nosep]
    \item 24 cores each with 6.5 GB of ram. 
    \item Ubuntu-1604-xenial-v20190913 operting system.
    \item 100 standard persistent disk. 
\end{enumerate}

\subsection{Synthetic $0$-hyperbolic metrics}

To produce random synthetic $0$-hyperbolic metrics, we do the following. First, we take a complete binary tree of depth $i$. We then compute its double tree. Then for each node in this tree we sample a number $C$ from $2$ to $10$ and replace the node with a clique of size $C$. We then pick a random node in the tree and compute the breadth first search tree from that node. We then assign edge uniformly randomly, sampled from $[0,1]$. 

\subsection{Synthetic Data Sets}

Here we sampled coordinates from the standard normal $\mathcal{N}(0,1)$. The final coordinate $x_0$ is set so that the point lies on the hyperboloid manifold. In the presence of a scale we just multiplied each coordinate by that scale before calculating $x_0$. We ran \textsc{TreeRep} with 1 thread. 

\subsection{Phylogenetic and Single Cell Data}

The immunological distances can be seen in Figure \ref{fig:sarich}. The matrix is symmeterized by averaging across the diagonal. In this case, we ran \textsc{TreeRep} 10 times and picked the tree with the lowest average distortion.

The figures for the trees are produced using an adaptation of Sarkar's construction for Euclidean space. The code from PT also produces a picture. This picture can be seen in Figure \ref{fig:PTSarich}. As we can see, this figure is similar to the one in the main text. 

\begin{figure}
    \centering
    \includegraphics[width = 0.8\linewidth]{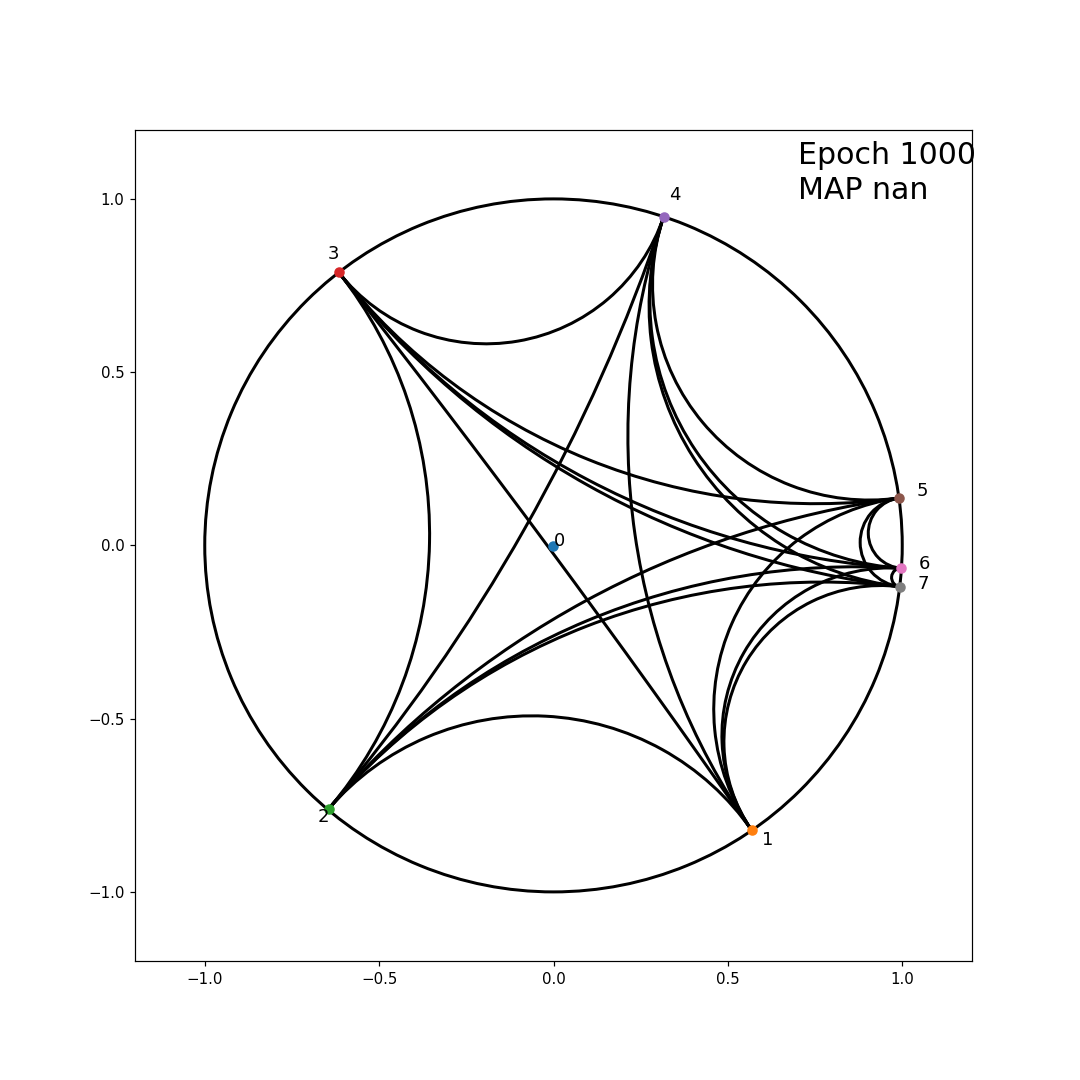}
    \caption{Figure for Sarich data produced by PT code}
    \label{fig:PTSarich}
\end{figure}

\begin{figure}[!htbp]
\centering
\includegraphics[width=0.6\linewidth]{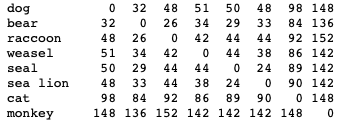}
\caption{Immunological distances from \cite{sarich}}
\label{fig:sarich}
\end{figure}

For the Zeisel data we did the same pre-processing as done in \citet{bianca}. For PM and MST, we use 10 nearest neighbor graph. For LS we used the complete graph. 

For the CBMC data we did the same pre-processing as done in \citet{bianca}. For MST and LS we used the complete graph.

\subsection{Unweighted Graphs}

Some of the graphs are disconnected. The largest connected component of each graph was used.

For $\delta$ calculation, we normalized the distances so that the maximum distance was 1 and then calculated $\delta$. For Celegans, Diseasome, and Phds,, this calculation is exact. 

For Yeast, Grid-worm and GRQC, we fixed the base point to be $w = 1$ and then calculated $\delta$. It is known from theory that for any fixed base point the $\delta$ is at least half of the $\delta$ for the whole metric \cite{bridson2013metric}. Thus, we get the inequality. 

All experiments with a ``-'' were terminated after 4 hours.

\begin{table*}[!ht]
\setlength{\tabcolsep}{1pt} %% default is 6pt
\caption{Table with the time taken in seconds, MAP, and average distortion for all of the algorithms when given metrics that come from unweighted graph. Darker cell colors indicates better numbers for MAP and average distortion. The number next to PT, PM, LM is the dimension of the space used to learn the embedding. The numbers for \textsc{TreeRep} (TR) are the average numbers over 20 trials. The table also shows some graph statistics such as $n$, the number of nodes, $m$, the number of edges, and $\delta$, the hyperbolcity of the metric.}
\begin{tabular}{lcccccccccccccc}
\toprule
Graph &  & TR & NJ & MST & LT & CT & LS & Bartal & PT & PT & PM & LM & LM & PM\\ 
& & & & & & & & & 2 & 200 & 2 & 2 & 200 & 200 \\\toprule
& \multicolumn{1}{c}{$n$} & \multicolumn{13}{c}{MAP}  \\  \cmidrule(r){3-15}
Celegan & 452 & \cellhim{49} 0.473 & \cellhim{83}  0.713 & \cellhim{31} 0.337 & \cellhim{23} 0.272 & \cellhim{46}  0.447 & \cellhim{28} 0.313 & \cellhim{42}  0.436 & \cellhim{0} 0.098 & \cellhim{100} 0.857 & \cellhim{50} 0.479 & \cellhim{48}  0.466 &\cellhim{72}  0.646 &\cellhim{74} 0.662   \\ 
Dieseasome & 516 & \cellhim{88} 0.895 &\cellhim{100} 0.962 & \cellhim{70} 0.789 & \cellhim{58} 0.725 & \cellhim{74} 0.815 & \cellhim{69} 0.785 &  \cellhim{38} 0.610& \cellhim{0} 0.392 &\cellhim{83} 0.868 &\cellhim{71} 0.799 & \cellhim{68} 0.781 & \cellhim{84} 0.874 &  \cellhim{86.6} 0.886   \\ 
CS Phd & 1025 & \cellhim{98}  0.979 & \cellhim{100}  0.993 & \cellhim{100}  0.991 & \cellhim{96}  0.964 &  \cellhim{76}  0.807 & \cellhim{100} 0.991  & \cellhim{0}   0.190&\cellhim{0}  0.190 & \cellhim{46} 0.556 & \cellhim{43} 0.537 &\cellhim{43}  0.537 & \cellhim{50} 0.593 & \cellhim{50} 0.593  \\ 
Yeast & 1458 & \cellhim{88}   0.815 & \cellhim{100}  0.892 & \cellhim{97}  0.871 & \cellhim{77}  0.742 & \cellhim{95}  0.859 & \cellhim{97}  0.873 &- & \cellhim{0}  0.235 & \cellhim{64} 0.658 & \cellhim{48}  0.522 & \cellhim{42} 0.513 & \cellhim{62} 0.641 & \cellhim{62} 0.643 \\ 
Grid-worm  & 3337 & \cellhim{81} 0.707 & \cellhim{100}  0.800 & \cellhim{94}  0.768 & \cellhim{71} 0.657  & - & \cellhim{94} 0.766 &- & -  & -& \cellhim{6} 0.334 &\cellhim{0}  0.306 & \cellhim{51} 0.558 & \cellhim{50}  0.553 \\
GRQC & 4158 &  \cellhim{54}  0.685  &\cellhim{100}  0.862 & \cellhim{54}  0.686  & \cellhim{0} 0.480  & - & \cellhim{54}  0.684 & - &- & - & \cellhim{29} 0.589 &\cellhim{32}  0.603 & \cellhim{79}  0.783 &\cellhim{80}  0.784\\
Enron & 33695 &\cellhim{100}  0.570 & - & \cellhim{80}  0.524 & - & - &  \cellhim{80} 0.523 & - &- & - & - & - & - & - \\
Wordnet & 74374 & \cellhim{99}  0.984  & - & \cellhim{100}  0.989  & - & -&  \cellhim{100}  0.989 &- &- & - & - & - & - & - \\
\bottomrule

  & $m$ & \multicolumn{13}{c}{Average Distortion}   \\  \cmidrule(r){3-15}
Celegan & 2024 & \cellhim{61} 0.197  & \cellhim{89}  0.124 & \cellhim{38} 0.255 & \cellhim{73} 0.166 &  \cellhim{11} 0.325 & \cellhim{0} 0.353 &\cellhim{51} 0.220 &\cellhim{46} 0.236 & \cellhim{100} 0.096 & \cellhim{46} 0.236 & \cellhim{40} 0.249 &\cellhim{50} 0.224 &\cellhim{55} 0.211   \\ 
Dieseasome & 1188 & \cellhim{57} 0.188  & \cellhim{61}  0.161 & \cellhim{61} 0.161 & \cellhim{62}  0.157 & \cellhim{7} 0.315 & \cellhim{20} 0.228 & \cellhim{2} 0.330 & \cellhim{38} 0.227 & \cellhim{100} 0.05 & \cellhim{4} 0.323 & \cellhim{2} 0.328 & \cellhim{0} 0.335 & \cellhim{1} 0.332  \\ 
CS Phd & 1043 & \cellhim{64} 0.204 & \cellhim{89}  0.134 &\cellhim{30}  0.298 & \cellhim{80}  0.161 & \cellhim{36} 0.282 &  \cellhim{37} 0.291 & \cellhim{20} 0.326 & \cellhim{31} 0.295 & \cellhim{100} 0.105 & \cellhim{2} 0.374 & \cellhim{1} 0.378 & \cellhim{1} 0.378&\cellhim{0} 0.380 \\ 
Yeast & 1948 & \cellhim{40} 0.205 & \cellhim{69}  0.149 & \cellhim{20}  0.243 & \cellhim{20}  0.243 & \cellhim{0} 0.282  & \cellhim{16} 0.243 &- & \cellhim{27} 0.230 & \cellhim{100} 0.089 & \cellhim{19} 0.246 & \cellhim{18} 0.248 & \cellhim{25} 0.234 & \cellhim{25} 0.234   \\ 
Grid-worm & 6421 & \cellhim{46} 0.188 & \cellhim{100}  0.135 & \cellhim{64} 0.171 & \cellhim{32}  0.202 & - & \cellhim{0} 0.234 &- & - & - & \cellhim{38} 0.196 & \cellhim{31} 0.203 & \cellhim{42} 0.192 & \cellhim{41} 0.193 \\ 
GRQC & 13422 & \cellhim{100} 0.192 & \cellhim{91}  0.200 & \cellhim{0} 0.275 & \cellhim{10}  0.267 & - & \cellhim{83} 0.206 & - &- & - & \cellhim{77} 0.212 &\cellhim{94}  0.198 &\cellhim{100}  0.193 & \cellhim{100} 0.193  \\
Enron & 180810 & \cellhim{100} 0.453 & - & \cellhim{20} 0.607 & - & - & \cellhim{49} 0.562 & - &- & - & - & - & - & - \\
Wordnet & 75834 &\cellhim{77} 0.131 & - & \cellhim{0} 0.336 & - & - &\cellhim{100} 0.071 & - &- & - & - & - & - & -  \\
\bottomrule

\multicolumn{1}{c}{} & $\delta$ &  \multicolumn{13}{c}{Time in seconds}  \\ \cmidrule(r){3-15}
Celegan & 0.21 &  0.014 & 0.28 &  0.0002  & 0.086 & 0.9 & 0.001 & 226 &573 & 1156 & 712 & 523 & 1578 &1927 \\ 
Dieseasome & 0.17 &  0.017 & 0.41 &  0.0003 &0.39 & 15.76 & 0.001  & 313 & 678  & 1479  & 414 & 365  & 978 &1112 \\ 
CS Phd & 0.23 &  0.037 & 2.94 & 0.0007 & 1.97 & 226 & 0.006 & 3559 & 1607 &  4145  & 467 & 324 & 768  & 1149  \\ 
Yeast & $\le 0.32$ &  0.057 & 8.04 & 0.0008 &8.21 & 957 & 0.001 &- & 9526 & 17876 & 972& 619& 1334 &  2269\\ 
Grid-worm & $\le 0.38$ & 0.731 &  163 &  0.001 & 191  & - & 0.007 & - &- & - & 2645 & 1973 & 4674 &5593\\
GRQC & $\le 0.36$ &  0.42 & 311 &  0.0014  & 70.9 & - & 0.006 &- & - & - & 7524 & 7217 & 9767 &1187\\
Enron & - &  27 & - & 0.013 & - & - & 0.13 & - &-&-&-&-&-&- \\
Wordnet & - &  74 & - & 0.18 & - & - & 0.08& - &-&-&-&-&-&-\\
\bottomrule
\end{tabular}
\label{table:graph}
\end{table*}

\subsection{Calculating $\alpha$}

Can be calculated directly using 

\[
	\alpha = \frac{Tr(D'*D}{\|D\|_F^2} 
\]

\section{Tree Representation Pseudo-code}
\label{sec:code}

\begin{algorithm}[!htb]
\caption{Recursive parts of TreeRep.}
\label{alg:tree_struct_recurse}
	\begin{algorithmic}[1]
	\Function {\textsc{zone1\_recursion}}{$T$, $d_T$, $d$, $L$, $v$}
		\If{Length($L$) == 0}
			\State \textbf{return} $T$
		\EndIf
		\If{Length($L$) == 1}
			\State Set $u$ = pop($L$) and add edge $(u, v)$ to $E$
			\State Set edge weight $d_T(u,v) = d(u,v)$ 
			\State \textbf{return} $T$
		\EndIf
		\State Set $u = $pop($L$), $z = $pop($L$)
		\State \textbf{return} \textsc{recursive\_step}($T,L,v,u,z,d$, $d_T$)
	\EndFunction \\
	
	\Function {\textsc{zone2\_recursion}}{$T$, $d_T$, $d$, $L$, $u$, $v$}
		\If{Length($L$) == 0}
			\Return $T$
		\EndIf
		\State Set $z$= the closest node to $v$. 
		\State Delete edge $(u,v)$
		\State \textbf{return:} \textsc{recursive\_step}($T,L,v,u,z,d$, $d_T$)
	\EndFunction
	\end{algorithmic}
\end{algorithm}

\begin{algorithm}[!ht]
\caption{Metric to tree structure algorithm.}
\label{alg:tree_struct2}
	\begin{algorithmic}[1]
	\Function {\textsc{Tree structure}}{X, $d$}
		\State $T = (V,E,d') = \emptyset $ \quad
		\State Pick any three data points uniformly at random $x,y,z \in X$. 
		\State $T$ = \textsc{recursive\_step}($T,X,x,y,z,d,d_T,$)
		\State \textbf{return} $T$
	\EndFunction \\
	\State
	\Function {\textsc{recursive\_step}}{$T,X,x,y,z,d,d_T,$}
		\State Let $Z1(r \rightarrow  [], x \rightarrow [], y\rightarrow [], z\rightarrow []), Z2(x \rightarrow [], y\rightarrow [], z\rightarrow [])$ \quad // Dictionaries of list for various zones
		%\STATE Construct universal tree for $x,y,z$
		\State Place an additional node $r$ in $V$ and add edges $xr, yr, zr$ to $E$ 
		\State Set the weights $d_T(x,r) = (y,z)_x$, $d_T(y,r) = (x,z)_y$, and $d_T(z,r) = (x,y)_z$ // If edge weight = 0, contract the edge. 
		\For{all remaining data points $w \in X$} 
			\State  $a = (x,y)_{w}$, $b = (y,z)_{w}$, $c = (z,x)_{w}$, $m = 0$, $m2 = 0$
			\If{$a == b == c$}
				\State push($w$, $Z1[r]$)
				\State Set $d_T(w,r) = (x,y)_{w}$
			\ElsIf{$a == maximum(a,b,c)$}
				\State $\pi = (x \rightarrow z, y \rightarrow y, z \rightarrow x)$
				\State $m = b$, $m2 = c$
				\State Set $d_T(w,r) = a$
			\ElsIf{$b == maximum(a,b,c)$}
				\State $\pi = (x \rightarrow x, y\rightarrow y, z\rightarrow z)$
				\State  $m = a$, $m2 = c$
				\State Set $d_T(w,r) = b$
			\ElsIf{$c== maximum(a,b,c)$}
				\State $\pi = (x \rightarrow y, y \rightarrow x, z \rightarrow z)$
				\State $m = a$, $m2 = b$
				\State Set $d_T(w,r) = c$
			\EndIf
			\If{$d(w,\pi x) == m$ or $d(w,\pi x) == m2$}
				\State push($w$, $Z1[\pi x]$)
			\Else
				\State push($w$, $Z2[\pi x]$)
			\EndIf
		\EndFor
	%\State Recurse for each of the zones
	
	// recurse on each of the zones
	\State $T$ = \textsc{zone1\_recursion}($T,d_T,d,Z1[r],r$)
	\State $T$ = \textsc{zone1\_recursion}($T,d_T,d,Z1[x],x$)
	\State $T$ = \textsc{zone1\_recursion}($T,d_T,d,Z1[y],y$)
	\State $T$ = \textsc{zone1\_recursion}($T,d_T,d,Z1[y],z$)
	
	\State $T$ = \textsc{zone2\_recursion}($T,d_T,d,Z2[x],x,r$)
	\State $T$ = \textsc{zone2\_recursion}($T,d_T,d,Z2[y],y,r$)
	\State $T$ = \textsc{zone2\_recursion}($T,d_T,d,Z2[z],z,r$)
	
	\Return $T$
	
	\EndFunction
	\end{algorithmic}
\end{algorithm}

We can see examples of what happens when we set the new Steiner node for the two different kinds of recursion in Figure \ref{fig:z12universal}

\begin{figure}[!htbp]
\centering
\hfill\includegraphics[width=0.35\linewidth]{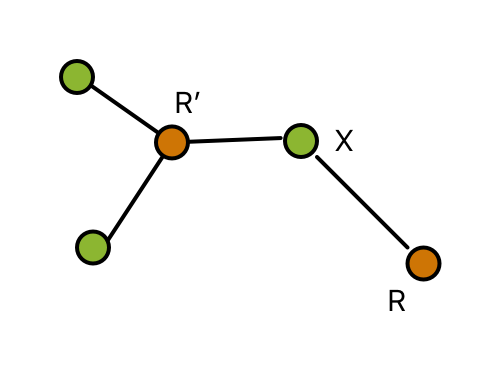}\hfill
\includegraphics[width=0.35\linewidth]{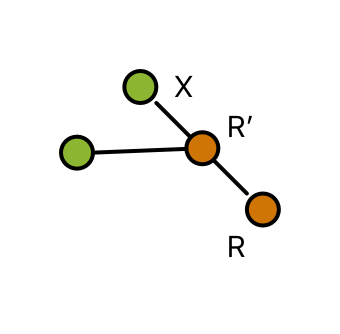}\hfill
\caption{Figure showing the placement of the Steiner node $R'$ for the Zone 1 and Zone 2 recursion. The nodes in orange are Steiner nodes and the nodes in green come from the data set $V$.}
\label{fig:z12universal}
\end{figure}

\end{document}